\pdfoutput=1 
\documentclass[accepted]{uai2026} 

\usepackage[american]{babel}

\usepackage{natbib} 
\usepackage{mathtools} 
\usepackage{booktabs} 
\usepackage{longtable} 

\usepackage{amsfonts}
\usepackage{amssymb}
\usepackage{amsthm}
\usepackage{graphicx}
\usepackage[table]{xcolor}
\usepackage{etoolbox}
\usepackage{mdframed}
\usepackage{tikz}
\usetikzlibrary{arrows.meta, positioning, shapes.geometric, calc}
\usepackage{multirow}
\usepackage{algorithm}
\usepackage{algpseudocode}
\usepackage{pifont}
\usepackage{cleveref}
\usepackage{titletoc}

\renewcommand{\eqref}[1]{Eq.~\ref{#1}} 

\graphicspath{{figures_new/}{figures/}}

\definecolor{theoremgray}{gray}{0.95}
\definecolor{tablegray}{gray}{0.96}
\definecolor{tableheadgray}{gray}{0.90}
\definecolor{tablegroupgray}{gray}{0.94}
\newcommand{\tablestyle}{%
	\rowcolors{2}{tablegray}{white}%
	\arrayrulecolor{black}%
	\renewcommand{\arraystretch}{1.12}%
	\setlength{\tabcolsep}{4.5pt}%
	\setlength{\aboverulesep}{0pt}%
	\setlength{\belowrulesep}{0pt}%
}
\AtBeginEnvironment{tabular}{\tablestyle}
\newcommand{\thc}[1]{#1}
\newcommand{\theadrow}[1]{\rowcolor{tableheadgray}#1\\}
\newcommand{\tablegroup}[2]{\rowcolor{tablegroupgray}\multicolumn{#1}{c}{#2}}
\mdfdefinestyle{theorembox}{
	backgroundcolor=theoremgray,
	hidealllines=true,
	linewidth=0pt,
	innerleftmargin=0pt,
	innerrightmargin=0pt,
	innertopmargin=0pt,
	innerbottommargin=0pt,
	skipabove=4pt,
	skipbelow=4pt
}

\theoremstyle{plain}
\newtheorem{theorem}{Theorem}

\newtheorem{proposition}[theorem]{Proposition}
\newtheorem{corollary}[theorem]{Corollary}
\theoremstyle{definition}
\newtheorem{definition}{Definition}
\theoremstyle{remark}
\newtheorem{remark}{Remark}
\theoremstyle{plain}
\newtheorem{assumption}{Assumption}

\surroundwithmdframed[style=theorembox]{theorem}
\surroundwithmdframed[style=theorembox]{lemma}
\surroundwithmdframed[style=theorembox]{proposition}
\surroundwithmdframed[style=theorembox]{corollary}
\surroundwithmdframed[style=theorembox]{definition}
\surroundwithmdframed[style=theorembox]{remark}
\surroundwithmdframed[style=theorembox]{assumption}

\newcommand{\norm}[1]{\left\|#1\right\|}
\newcommand{\abs}[1]{\left|#1\right|}
\newcommand{\E}{\mathbb{E}}

\newcommand{\R}{\mathbb{R}}
\newcommand{\C}{\mathbb{C}}

\newcommand{\Tr}{\mathrm{Tr}}

\DeclareMathOperator*{\argmin}{argmin}

\DeclareMathOperator{\spec}{spec}

\DeclareMathOperator{\softplus}{softplus}

\title{Residual Koopman Spectral Profiling for Predicting and Preventing Transformer Training Instability}

\author[1]{Bum Jun Kim}
\author[1]{Shohei Taniguchi}
\author[1]{Makoto Kawano}
\author[1]{Yusuke Iwasawa}
\author[1]{Yutaka Matsuo}
\affil[1]{Graduate School of Engineering, The University of Tokyo, Japan}

\begin{document}

\maketitle

\begin{abstract}
	Training divergence in transformers wastes compute, yet practitioners discover instability only after expensive runs begin. They therefore need an expected probability of failure for a transformer before training starts. Our study of Residual Koopman Spectral Profiling (RKSP) provides such an estimate. From a single forward pass at initialization, RKSP extracts Koopman spectral features by applying whitened dynamic mode decomposition to layer-wise residual snapshots. Our central diagnostic, the near-unit spectral mass, quantifies the fraction of modes concentrated near the unit circle, which captures instability risk. For predicting divergence across extensive configurations, this estimator achieves an AUROC of 0.995, outperforming the best gradient baseline. We further make this diagnostic actionable through Koopman Spectral Shaping (KSS), which reshapes spectra during training. We empirically validate that our method works in practice: RKSP predicts divergence at initialization, and when RKSP flags high risk, turning on KSS successfully prevents divergence. In the challenging high learning rate regime without normalization layers, KSS reduces the divergence rate from 66.7\% to 12.5\% and enables learning rates that are 50\% to 150\% higher. These findings generalize to WikiText-103 language modeling, vision transformers on CIFAR-10, and pretrained language models, including GPT-2 and LLaMA-2 up to 7B, as well as emerging architectures such as MoE, Mamba-style SSMs, and KAN.
\end{abstract}

\section{Introduction}
\label{sec:intro}
Transformers \citep{vaswani2017attention} exhibit unpredictable training dynamics despite stabilization techniques such as layer normalization \citep{ba2016layernorm} and careful initialization. In particular, exploding-gradient instability remains a known failure mode in deep networks \citep{pascanu2013difficulty}. This unpredictability in training divergence wastes compute: practitioners often discover divergence only after expensive runs have begun. Addressing this problem requires a calibrated, initialization-time risk estimate, which enables early pruning of risky configurations. Such an estimate should output a probability that matches empirical frequency rather than a heuristic score.

To provide such estimates, we propose Residual Koopman Spectral Profiling (RKSP), which views transformer layers as discrete-time dynamical systems. From a single forward pass, RKSP applies whitened dynamic mode decomposition (DMD) \citep{tu2014dmd} to estimate local linear operators that approximate the residual stream evolution $\mathbf{h}_0 \to \mathbf{h}_1 \to \cdots \to \mathbf{h}_L$. The resulting layer-wise spectra provide a compact, predictive summary of stability.

Intuitively, when a layer's local linearization is close to normal, eigenvalues near the unit circle imply near-isometric propagation and therefore weak damping of signals and gradients. In high learning rate regimes, such weak damping can leave perturbations and optimization noise to be less attenuated, increasing divergence risk; conversely, strongly contractive spectra tend to be more stable but may cause rapid gradient decay. We summarize this trade-off with the near-unit mass $M_{\approx 1}$, the fraction of modes near the unit circle, together with a separate measure of non-normality; Section~\ref{sec:theory} provides a theoretical explanation for this behavior.

The near-unit mass $M_{\approx 1}$ exhibits a strong instability signal; using the monotone risk score $M_{\approx 1}$ yields an Area Under the Receiver Operating Characteristic curve (AUROC) of 0.995 for divergence prediction across various normalization strategies, tasks, and architectures. To make this signal further actionable, we introduce Koopman Spectral Shaping (KSS), which reshapes spectra during training to reduce divergence.

\noindent Our contributions are as follows.
\begin{itemize}
	\item We propose RKSP, which estimates layer-wise Koopman spectra at initialization via whitened DMD (Section~\ref{sec:rsp}).
	\item We propose KSS, which reshapes spectra during training, preventing instability and permitting larger learning rates (Section~\ref{sec:kss}).
	\item We provide theoretical bounds linking near-unit mass, near-normality, and the trade-offs between instability and expressivity (Section~\ref{sec:theory}).
	\item We validate RKSP across various normalization strategies, datasets, and architectures, including language models, vision transformer (ViT) models, and emerging neural network families (Section~\ref{sec:experiments} and Appendix~\ref{app:scale}).
\end{itemize}

\section{Background}
\label{sec:background}
A transformer with $L$ layers updates its residual stream according to the residual formulation \citep{he2016resnet}:
\begin{equation}
	\mathbf{h}_{\ell+1} = \mathbf{h}_\ell + f_\ell(\mathbf{h}_\ell; \theta_\ell) = F_\ell(\mathbf{h}_\ell),
	\label{eq:residual}
\end{equation}
where $f_\ell$ comprises self-attention or multi-layer perceptron sub-layers. This formulation reveals that each layer transition defines a discrete-time dynamical system. Also, residual networks can be viewed as discretizations of continuous-time dynamical systems, motivating stability analysis from an ordinary differential equation perspective \citep{haber2017stable,chen2018neuralode}. Indeed, the local linearization of this system characterizes its stability.

\subsection{Koopman Operator Theory and DMD Primer}
Consider a discrete-time dynamical system $\mathbf{x}_{t+1} = F(\mathbf{x}_t)$ on state space $\mathcal{X} \subseteq \R^d$. The Koopman operator $\mathcal{K}: \mathcal{F} \to \mathcal{F}$ acts on observable functions $g: \mathcal{X} \to \C$ via composition with the dynamics \citep{koopman1931hamiltonian,mezic2005spectral,mezic2013analysis,tu2014dmd}:
\begin{equation}
	(\mathcal{K}g)(\mathbf{x}) \triangleq g(F(\mathbf{x})).
	\label{eq:koopman_def}
\end{equation}
For a nonlinear $F$, the operator $\mathcal{K}$ is infinite-dimensional but remains linear regardless of the nonlinearity in $F$. The spectral properties of $\mathcal{K}$---its eigenvalues $\{\lambda_j\}$ and eigenfunctions $\{\phi_j\}$---encode the intrinsic timescales and geometric structure of the dynamics:
\begin{equation}
	\begin{aligned}
		\mathcal{K}\phi_j                         & = \lambda_j \phi_j,                                          \\
		\text{therefore } \phi_j(F^n(\mathbf{x})) & = \lambda_j^n \phi_j(\mathbf{x}),  \text{for all } n \geq 0.
	\end{aligned}
	\label{eq:koopman_mode}
\end{equation}
The spectrum admits direct interpretation: modes with $|\lambda_j| > 1$ grow exponentially, modes with $|\lambda_j| < 1$ decay exponentially, and modes with $|\lambda_j| = 1$ persist or oscillate. The argument $\arg(\lambda_j)$ gives the oscillation frequency of mode $j$ \citep{rowley2009spectral}.

In a layer-wise, non-autonomous setting, each layer has its own distinct operator, so each $\hat{\mathbf{A}}_\ell$ must be estimated separately. Thus, the modulus $|\lambda(\hat{\mathbf{A}}_\ell)|$ indicates a local, per-layer expansion or contraction tendency. DMD approximates these layer-wise Koopman operators from finite data \citep{schmid2010dmd,tu2014dmd,kutz2016dmdbook}.

\subsection{Related Work}
\label{sec:related}
\paragraph{Koopman methods in machine learning.} Existing Koopman methods approximate operators via DMD and its extensions or learn linearizing transformations for dynamics prediction \citep{lusch2018deeplin,williams2015edmd,korda2018linear}. These methods primarily target representation learning.

\paragraph{Edge of chaos.} The edge of chaos hypothesis links optimal trainability to critical initialization and signal propagation regimes \citep{schoenholz2017deepinfo,poole2016chaos,pennington2017dynamical}, where signals neither explode nor vanish.

\paragraph{Neural tangent kernel.} The neural tangent kernel characterizes gradient descent in the infinite-width limit and yields kernel-regression behavior \citep{jacot2018ntk}.
In particular, linearizing the network around its initialization makes the kernel essentially constant, so training reduces to regression with this fixed kernel.

\paragraph{Mean-field theory and $\mu$P.} Mean-field theory and Maximal Update Parameterization ($\mu$P) enable hyperparameter transfer across scales through asymptotic analysis \citep{schoenholz2017deepinfo,yang2022tensorprograms}.

\paragraph{Normalization-free training.} Normalization-free residual networks can be stabilized with Fixup initialization \citep{zhang2019fixup}. ReZero trains deep residual networks and transformers without normalization by introducing a residual scaling parameter initialized to zero, so the network starts near an identity map \citep{bachlechner2020rezero}. Normalizer-Free Networks replace normalization with scaled activations and adaptive gradient clipping, enabling stable large-scale training without normalization layers \citep{brock2021nfnet}. We discuss Fixup and ReZero-style identity initialization through the RKSP lens in Appendix~\ref{app:practical_notes}.

\paragraph{Positioning of this study.} RKSP uses Koopman spectra as an initialization-time diagnostic: from a single forward pass, we estimate layer-wise operators and predict divergence risk. Unlike Koopman representation learning for forecasting \citep{lusch2018deeplin,williams2015edmd,korda2018linear}, our near-unit mass $M_{\approx 1}$ provides a measurable handle on critical signal propagation \citep{schoenholz2017deepinfo,poole2016chaos,pennington2017dynamical} and captures instabilities beyond neural tangent kernel analyses \citep{jacot2018ntk}; we make it actionable via KSS and validate it in no-normalization regimes \citep{zhang2019fixup,brock2021nfnet}.

\section{Method}
\label{sec:method}

\subsection{Problem Setup}
\label{sec:setup}

\paragraph{Divergence Definition} A training run is marked as diverged if, at any step, the loss exceeds 50.0 or the gradient norm exceeds 500.0. This criterion defines the binary label $D \in \{0,1\}$ used throughout our experiments.

\paragraph{Divergence Prediction Task} Given a model architecture, normalization strategy, optimizer choice, and dataset, we collect $N$ residual-stream snapshots at initialization from a single forward pass and compute the spectral profile $\mathcal{S}$. A probabilistic predictor then maps $\mathcal{S}$ to $P(D=1 \mid \mathcal{S})$, estimating divergence risk before training begins.

\subsection{Residual Koopman Spectral Profiling}
\label{sec:rsp}
Algorithm~\ref{alg:rsp} summarizes our RKSP procedure. The algorithm computes DMD for each layer to obtain its spectral profile: $\rho_\ell$ denotes the spectral radius of $\hat{\mathbf{A}}_\ell$, $\kappa_\ell$ the eigenvector condition number of its eigenbasis, and $\mathcal{K}_\ell$ the Kreiss constant (Appendix~\ref{app:theory}).
\begin{algorithm}[t]
	\caption{Residual Koopman Spectral Profiling}
	\label{alg:rsp}
	\begin{algorithmic}[1]
		\Require Model $\mathcal{M}$ with $L$ layers; dataset $\mathcal{D}$; the number of samples $N$
		\Ensure Spectral profile $\mathcal{S} = \{(M_{\approx 1}^\ell, \rho_\ell, \kappa_\ell, \eta_{\mathrm{nl}}^\ell, \mathcal{K}_\ell)\}_{\ell=0}^{L-1}$
		\State Collect residuals: for each batch $\mathbf{x} \in \mathcal{D}$, store $\{\mathbf{h}_\ell(\mathbf{x})\}_{\ell=0}^L$
		\For{$\ell = 0, \ldots, L-1$}
		\State Form snapshot matrices $\mathbf{X}_\ell, \mathbf{Y}_\ell \in \R^{d \times N}$
		\State Whitening: $\tilde{\mathbf{X}}_\ell = \hat{\boldsymbol{\Sigma}}_\ell^{-1/2}(\mathbf{X}_\ell - \bar{\mathbf{X}}_\ell)$, $\tilde{\mathbf{Y}}_\ell = \hat{\boldsymbol{\Sigma}}_\ell^{-1/2}(\mathbf{Y}_\ell - \bar{\mathbf{Y}}_\ell)$
		\State DMD: $\hat{\mathbf{A}}_\ell = \tilde{\mathbf{Y}}_\ell \tilde{\mathbf{X}}_\ell^\dagger$
		\State Eigendecomposition: $\hat{\mathbf{A}}_\ell = \mathbf{V}_\ell \boldsymbol{\Lambda}_\ell \mathbf{V}_\ell^{-1}$
		\State Compute $M_{\approx 1}^\ell, \rho_\ell, \kappa_\ell$, nonlinearity $\eta_{\mathrm{nl}}^\ell$, Kreiss $\mathcal{K}_\ell$
		\EndFor
		\State Aggregate mean, max, min, std across layers
		\State \Return $\mathcal{S}$
	\end{algorithmic}
\end{algorithm}

\subsubsection{Snapshot Construction}
RKSP applies DMD to each layer transition $\mathbf{h}_\ell \to \mathbf{h}_{\ell+1}$ as described in \eqref{eq:residual}. Let $\{\mathbf{x}_i\}_{i=1}^N$ denote the $N$ inputs used to form the snapshots. For layer $\ell$, define the paired residual vectors $\mathbf{x}_i^{(\ell)} = \mathbf{h}_\ell(\mathbf{x}_i)$ and $\mathbf{y}_i^{(\ell)} = \mathbf{h}_{\ell+1}(\mathbf{x}_i)$. The snapshot matrices are
\begin{equation}
	\begin{aligned}
		\mathbf{X}_\ell & = [\mathbf{x}_1^{(\ell)}, \ldots, \mathbf{x}_N^{(\ell)}],                     \\
		\mathbf{Y}_\ell & = [\mathbf{y}_1^{(\ell)}, \ldots, \mathbf{y}_N^{(\ell)}] \in \R^{d \times N},
	\end{aligned}
\end{equation}
so each column pair corresponds to the same sample. Unlike standard DMD, which uses time-shifted trajectories, RKSP pairs columns across different samples. Concretely, column $i$ in $\mathbf{X}$ is the residual snapshot $\mathbf{h}_\ell(\mathbf{x}_i)$ and column $i$ in $\mathbf{Y}$ is the corresponding next-layer snapshot $\mathbf{h}_{\ell+1}(\mathbf{x}_i)$ for the same sample $\mathbf{x}_i$; columns index independent samples, not time steps of a single trajectory. For each layer, DMD yields a local linear approximation $\hat{\mathbf{A}}_\ell$ whose spectrum characterizes the dynamics at that depth.

To quantify how well this linear approximation fits the data, we define the nonlinearity ratio in whitened coordinates:
\begin{equation}
	\eta_{\mathrm{nl}}(\ell) \coloneqq \frac{\norm{\tilde{\mathbf{Y}}_\ell - \hat{\mathbf{A}}_\ell \tilde{\mathbf{X}}_\ell}_F}{\norm{\tilde{\mathbf{Y}}_\ell - \tilde{\mathbf{X}}_\ell}_F + \varepsilon_{\mathrm{nl}}},
	\label{eq:nonlinearity}
\end{equation}
where $\varepsilon_{\mathrm{nl}} > 0$ is a small constant that prevents division by zero. This ratio $\eta_{\mathrm{nl}}$ normalizes the fit error by the update magnitude. When the residual update $\norm{\tilde{\mathbf{Y}}_\ell-\tilde{\mathbf{X}}_\ell}_F$ is tiny, $\eta_{\mathrm{nl}}$ can be large even for small absolute errors. We therefore use $\eta_{\mathrm{nl}}$ primarily as a DMD reliability flag rather than as a pure measure of nonlinearity.

\subsubsection{Whitened DMD and Reliability Filtering}
DMD approximates the Koopman operator from data snapshots. Given paired snapshot matrices $\mathbf{X} = [\mathbf{x}_1, \ldots, \mathbf{x}_N] \in \R^{d \times N}$ and $\mathbf{Y} = [\mathbf{y}_1, \ldots, \mathbf{y}_N] \in \R^{d \times N}$, DMD solves for the optimal linear operator:
\begin{equation}
	\hat{\mathbf{A}}_{\mathrm{DMD}} = \argmin_{\mathbf{A} \in \R^{d \times d}} \norm{\mathbf{Y} - \mathbf{A}\mathbf{X}}_F^2 = \mathbf{Y}\mathbf{X}^\dagger,
	\label{eq:dmd}
\end{equation}
where $\mathbf{X}^\dagger$ denotes the Moore-Penrose pseudoinverse.

To ensure scale-invariance and numerical stability, we apply $\mathbf{X}$-based zero-phase component analysis whitening \citep{kessy2018whitening}:
\begin{align}
	\tilde{\mathbf{X}}          & = \hat{\boldsymbol{\Sigma}}_X^{-1/2}(\mathbf{X} - \bar{\mathbf{X}}\mathbf{1}^\top), \label{eq:whitening}                              \\
	\tilde{\mathbf{Y}}          & = \hat{\boldsymbol{\Sigma}}_X^{-1/2}(\mathbf{Y} - \bar{\mathbf{Y}}\mathbf{1}^\top),                                                   \\
	\hat{\boldsymbol{\Sigma}}_X & = \frac{1}{N-1}(\mathbf{X} - \bar{\mathbf{X}}\mathbf{1}^\top)(\mathbf{X} - \bar{\mathbf{X}}\mathbf{1}^\top)^\top + \epsilon\mathbf{I}
\end{align}
where $\bar{\mathbf{X}} = \frac{1}{N}\sum_{i=1}^{N} \mathbf{x}_i$, $\bar{\mathbf{Y}} = \frac{1}{N}\sum_{i=1}^{N} \mathbf{y}_i$, and $\epsilon > 0$ ensures invertibility. The same whitening matrix is applied to both $\mathbf{X}$ and $\mathbf{Y}$, so the regression operates within a single, $\mathbf{X}$-normalized coordinate system. This whitening step ensures cross-model comparability and yields coordinate-invariant spectral estimates.

From the whitened data, we form the DMD operator $\hat{\mathbf{A}} = \tilde{\mathbf{Y}}\tilde{\mathbf{X}}^\dagger$ and compute its eigendecomposition $\hat{\mathbf{A}} = \mathbf{V}\boldsymbol{\Lambda}\mathbf{V}^{-1}$. To report the eigenvector condition number $\kappa(\mathbf{V}) = \norm{\mathbf{V}}_2\norm{\mathbf{V}^{-1}}_2$, we first normalize each right eigenvector to have unit Euclidean norm. This normalization fixes the otherwise arbitrary scaling of $\mathbf{V}$ and makes $\kappa(\mathbf{V})$ reproducible.

To identify spurious eigenvalues, we apply residual DMD (ResDMD) reliability filtering \citep{colbrook2023resdmd}. For each eigenvalue $\lambda_j$ with left eigenvector $\mathbf{u}_j$ satisfying $\mathbf{u}_j^*\hat{\mathbf{A}} = \lambda_j \mathbf{u}_j^*$, we compute the per-mode residual:
\begin{equation}
	r_j = \frac{\norm{\mathbf{u}_j^*(\tilde{\mathbf{Y}} - \lambda_j \tilde{\mathbf{X}})}_2}{\norm{\mathbf{u}_j^*\tilde{\mathbf{X}}}_2 + \varepsilon_r},
\end{equation}
where $\varepsilon_r > 0$ prevents division by zero. Eigenvalues with $r_j > \tau$ are flagged as unreliable and potentially spurious; we use a default threshold of $\tau = 0.1$ to filter them out.

\subsubsection{Spectral Mass}
Now, we define our criterion for divergence prediction.
\begin{definition}[Spectral Mass Partition]
	\label{def:spectral_mass}
	For DMD eigenvalues $\Lambda = \{\lambda_j\}_{j=1}^m$ of an operator $\hat{\mathbf{A}}$, with $m$ the number of eigenvalues used, we define the following bins; they are disjoint provided $\delta_c \ge \epsilon_n$:
	\begin{align}
		M_{>1}(\hat{\mathbf{A}})        & \triangleq \frac{1}{m}\sum_{j=1}^m \mathbf{1}[|\lambda_j| > 1 + \epsilon_u],                 \\
		M_{\approx 1}(\hat{\mathbf{A}}) & \triangleq \frac{1}{m}\sum_{j=1}^m \mathbf{1}[|\lambda_j| \in [1-\epsilon_n, 1+\epsilon_u]], \\
		M_{<1}(\hat{\mathbf{A}})        & \triangleq \frac{1}{m}\sum_{j=1}^m \mathbf{1}[|\lambda_j| < 1 - \delta_c],
	\end{align}
	where the three quantities denote the expansive mass, near-unit mass, and contractive mass, respectively. When $\delta_c > \epsilon_n$, these three bins are not exhaustive; the remaining intermediate mass is $M_{\mathrm{mid}}(\hat{\mathbf{A}}) \triangleq 1 - M_{>1}(\hat{\mathbf{A}}) - M_{\approx 1}(\hat{\mathbf{A}}) - M_{<1}(\hat{\mathbf{A}})$.
\end{definition}
For example, $m=d$ for full DMD, $m=r$ for rank-$r$ randomized DMD \citep{erichson2019randdmd}, or $m$ equals the count remaining after reliability filtering. We use default thresholds $\epsilon_u = 0.05$, $\epsilon_n = 0.10$, and $\delta_c = 0.20$. Figure~\ref{fig:eigenvalue} shows a representative eigenvalue scatter that motivates these bins.

\begin{figure}[t]
	\centering
	\includegraphics[width=1.0\columnwidth]{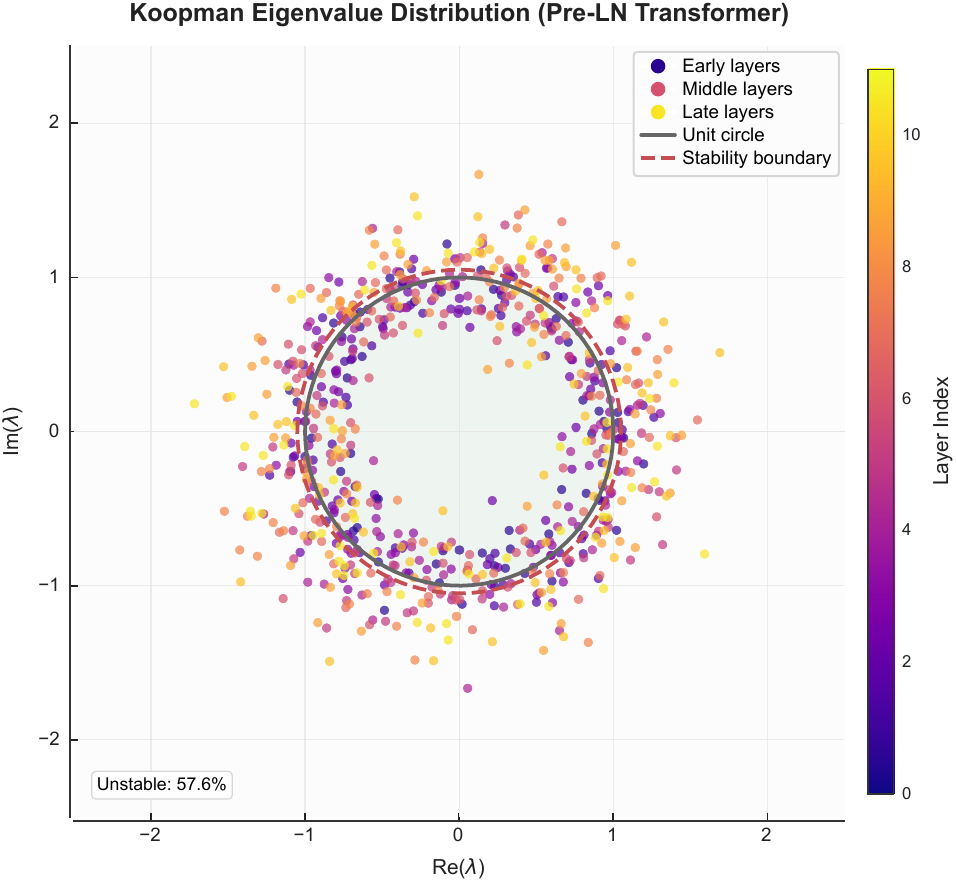}
	\caption{Scatter plot of DMD eigenvalues across layers in a pre-layer normalization transformer. The color gradient indicates layer depth; blue is early and red is late. Early layers cluster near the unit circle; late layers exhibit an increased spectral radius.}
	\label{fig:eigenvalue}
\end{figure}

\paragraph{Metric Interpretation.} For divergence prediction, we use $M_{\approx 1}$ itself as the scalar score and compute the AUROC against the divergence labels. Note that the expansive mass $M_{>1}$ tracks eigenvalues outside the unit circle but does not map one-to-one with empirical divergence. Four factors explain this gap between $M_{>1}$ and observed divergence. First, whitening rescales local coordinates, so raw eigenvalue magnitudes differ from unwhitened values. Second, DMD provides only a local linear approximation of the true nonlinear dynamics. Third, non-normal transient growth can trigger instability even when few eigenvalues exceed 1 \citep{trefethen2005spectra}. Additionally, our divergence labels use coarse thresholds on loss or gradient norm, so finite-horizon training within the evaluation window can remain stable despite a nonzero $M_{>1}$. These four factors together explain cases like Pre-LN, which shows a nonzero $M_{>1}$ but 0\% divergence in Table~\ref{tab:main}.

\subsection{Koopman Spectral Shaping}
\label{sec:kss}
While RKSP diagnoses instability, KSS prevents it. KSS adds a differentiable spectral regularizer to the training objective that steers eigenvalues away from the unstable region while reducing excessive near-unit mass to restore damping without causing over-contraction. The total objective becomes $\mathcal{L}_{\mathrm{total}} = \mathcal{L}_{\mathrm{task}} + \alpha \sum_{\ell \in \mathcal{S}} \mathcal{L}_{\mathrm{KSS}}^\ell / |\mathcal{S}|$, where $\mathcal{S}$ samples 50\% of layers per update.
\begin{definition}[KSS Regularization Loss]
	For layer $\ell$ with randomized DMD eigenvalues $\{\lambda_j^\ell\}_{j=1}^r$, the KSS loss is
	\begin{equation}
		\begin{aligned}
			\mathcal{L}_{\mathrm{KSS}}^\ell
			 & = \underbrace{\sum_{j=1}^r \sigma(T(|\lambda_j^\ell| - \tau_u)) \cdot \softplus(|\lambda_j^\ell| - \tau_u)^2}_{\text{Unstable penalty}} \\
			 & + \underbrace{\beta \cdot (m_\ell^{\mathrm{soft}} - \gamma)^2}_{\text{Near-unit target}}
		\end{aligned}
		\label{eq:kss_loss}
	\end{equation}
	where $\sigma(\cdot)$ denotes the sigmoid function and
	\begin{equation}
		m_\ell^{\mathrm{soft}} = \frac{1}{r}\sum_{j=1}^r \sigma(T(|\lambda_j^\ell| - \tau_l))\cdot \sigma(T(\tau_u - |\lambda_j^\ell|)).
	\end{equation}
\end{definition}
This term facilitates reducing excessive near-unit mass while preventing it from becoming too small by nudging $m_\ell^{\mathrm{soft}}$ toward the target band $\gamma$. We use the default hyperparameters $T = 20$, $\tau_u = 1.05$, $\tau_l = 0.90$, and $\gamma \in [0.3, 0.5]$. Full hyperparameter settings and the practical training recipe appear in Appendix~\ref{app:details}.

\section{Theoretical Analysis}
\label{sec:theory}
We interpret the near-unit mass $M_{\approx 1}$ as an instability score. Two mechanisms support this view. First, when the layer linearization is approximately normal, eigenvalues accurately reflect singular values, so concentration near $|\lambda|\approx 1$ implies near-isometric propagation and weak damping. Second, weak damping allows perturbations and optimization noise to persist across depth; in aggressive optimization regimes, this behavior raises divergence risk. We formalize the energy-preservation statement below and highlight non-normality as a caveat.
\begin{theorem}[Near-Unit Energy Preservation under Near-Normality]
	\label{thm:near_unit}
	Let $\mathbf{A} \in \C^{d \times d}$ be normal with eigenvalues $\{\lambda_j\}_{j=1}^d$. For a unit vector $\mathbf{x}$ drawn uniformly on the sphere,
	\begin{equation}
		\E \norm{\mathbf{A}\mathbf{x}}_2^2 = \frac{1}{d}\sum_{j=1}^d |\lambda_j|^2.
	\end{equation}
	If $\rho(\mathbf{A})\le 1+\epsilon_u$ and $M_{\approx 1}(\mathbf{A})$ denotes the fraction of eigenvalues with $|\lambda_j|\in [1-\epsilon_n,1+\epsilon_u]$, then
	\begin{equation}
		(1-\epsilon_n)^2 M_{\approx 1}(\mathbf{A}) \le \E \norm{\mathbf{A}\mathbf{x}}_2^2 \le (1+\epsilon_u)^2.
	\end{equation}
	Hence larger $M_{\approx 1}$ implies more energy-preserving and less damped propagation; this corresponds to higher instability risk.
	More generally, if $\mathbf{A}$ is diagonalizable with $\mathbf{A}=\mathbf{V}\boldsymbol{\Lambda}\mathbf{V}^{-1}$, then the same conclusion holds up to factors $\kappa(\mathbf{V})^{\pm 2}$:
	\begin{equation}
		\frac{(1-\epsilon_n)^2}{\kappa(\mathbf{V})^2} M_{\approx 1}(\mathbf{A})
		\le \E \norm{\mathbf{A}\mathbf{x}}_2^2
		\le \kappa(\mathbf{V})^2(1+\epsilon_u)^2.
	\end{equation}
\end{theorem}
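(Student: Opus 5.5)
The whole argument reduces to a trace identity. I would first establish the second-moment fact for $\mathbf{x}$ uniform on the unit sphere: $\E[\mathbf{x}\mathbf{x}^*] = \frac{1}{d}\mathbf{I}$. This holds for both the real sphere in $\R^d$ and the complex sphere in $\C^d$. It follows from rotation (or unitary) invariance of the distribution, which forces $\E[\mathbf{x}\mathbf{x}^*]$ to be a multiple of $\mathbf{I}$, together with $\Tr\,\E[\mathbf{x}\mathbf{x}^*] = \E\norm{\mathbf{x}}_2^2 = 1$. Then
\[
\E\norm{\mathbf{A}\mathbf{x}}_2^2 = \E\,\Tr(\mathbf{A}^*\mathbf{A}\mathbf{x}\mathbf{x}^*) = \frac{1}{d}\Tr(\mathbf{A}^*\mathbf{A}) = \frac{1}{d}\norm{\mathbf{A}}_F^2 .
\]
If $\mathbf{x}$ is real and $\mathbf{A}$ is complex, the same computation goes through with $\mathbf{x}^\top$ in place of $\mathbf{x}^*$, because $\mathbf{x}^*=\mathbf{x}^\top$ and $\Tr(\mathbf{A}^*\mathbf{A})$ is real. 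So every claim reduces to bounds on $\norm{\mathbf{A}}_F^2$.

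\textbf{Normal case.} Since $\mathbf{A}$ is normal, $\mathbf{A}=\mathbf{U}\boldsymbol{\Lambda}\mathbf{U}^*$ with $\mathbf{U}$ unitary. Unitary invariance of the Frobenius norm gives $\norm{\mathbf{A}}_F^2=\sum_j|\lambda_j|^2$, which is the identity. For the two-sided bound I would split the sum over eigenvalues.
\begin{itemize}
\item Lower bound: drop every term whose modulus lies outside $[1-\epsilon_n,1+\epsilon_u]$, all of which are nonnegative. Each remaining term is at least $(1-\epsilon_n)^2$, and there are $d\,M_{\approx 1}(\mathbf{A})$ of them.
\item Upper bound: every term is at most $\rho(\mathbf{A})^2\le(1+\epsilon_u)^2$, and the average of $d$ such terms is at most $(1+\epsilon_u)^2$.
\end{itemize}
The qualitative sentence ("hence larger $M_{\approx 1}$ implies \ldots") is an interpretation of the lower bound rather than a separate claim. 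I would remark that it follows from the lower bound being monotone in $M_{\approx 1}$.

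\textbf{Diagonalizable case.} This is the only step needing care. The goal is to sandwich $\norm{\mathbf{A}}_F^2$ between $\kappa(\mathbf{V})^{-2}\norm{\boldsymbol{\Lambda}}_F^2$ and $\kappa(\mathbf{V})^{2}\norm{\boldsymbol{\Lambda}}_F^2$. Both bounds use the submultiplicative inequality $\norm{\mathbf{B}\mathbf{C}\mathbf{D}}_F\le\norm{\mathbf{B}}_2\norm{\mathbf{C}}_F\norm{\mathbf{D}}_2$.
\begin{itemize}
\item Upper: applied to $\mathbf{A}=\mathbf{V}\boldsymbol{\Lambda}\mathbf{V}^{-1}$, it gives $\norm{\mathbf{A}}_F\le\kappa(\mathbf{V})\norm{\boldsymbol{\Lambda}}_F$.
\item Lower: applied to $\boldsymbol{\Lambda}=\mathbf{V}^{-1}\mathbf{A}\mathbf{V}$, it gives $\norm{\boldsymbol{\Lambda}}_F\le\kappa(\mathbf{V})\norm{\mathbf{A}}_F$.
\end{itemize}
Combining these with the identity $\norm{\boldsymbol{\Lambda}}_F^2=\sum_j|\lambda_j|^2$ and the same binning estimates as in the normal case yields the stated inequalities with factors $\kappa(\mathbf{V})^{\pm2}$. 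The upper bound again uses $\rho(\mathbf{A})\le 1+\epsilon_u$.

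I do not expect a real obstacle. The point to check is that the bounds do not depend on how the columns of $\mathbf{V}$ are scaled, since any such scaling is absorbed into $\kappa(\mathbf{V})$. It is also worth noting that the normal case is recovered when $\kappa=1$.
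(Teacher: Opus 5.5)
Your proposal is correct and follows the paper's proof essentially step for step. Both reduce to $\E\norm{\mathbf{A}\mathbf{x}}_2^2=\frac{1}{d}\norm{\mathbf{A}}_F^2$ via $\E[\mathbf{x}\mathbf{x}^*]=\frac{1}{d}\mathbf{I}$, use $\norm{\mathbf{A}}_F^2=\sum_j|\lambda_j|^2$ with the same binning in the normal case, and sandwich $\norm{\mathbf{A}}_F$ between $\kappa(\mathbf{V})^{-1}\norm{\boldsymbol{\Lambda}}_F$ and $\kappa(\mathbf{V})\norm{\boldsymbol{\Lambda}}_F$ in the diagonalizable case; your derivation of that sandwich from $\norm{\mathbf{B}\mathbf{C}\mathbf{D}}_F\le\norm{\mathbf{B}}_2\norm{\mathbf{C}}_F\norm{\mathbf{D}}_2$ just fills in a step the paper asserts without justification.
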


\begin{corollary}[Depth-wise Damping and Gradient Flow]
	\label{cor:near_unit_prob}
	Consider a depth-$L$ linearization $\mathbf{h}_{\ell+1}=\mathbf{A}_\ell \mathbf{h}_\ell$ where each $\mathbf{A}_\ell$ is a normal and $\rho(\mathbf{A}_\ell)\le 1+\epsilon_u$. Assume that each layer has an isotropic second moment, that is,
	$\E[\mathbf{h}_\ell\mathbf{h}_\ell^*] = \frac{1}{d}\E\norm{\mathbf{h}_\ell}_2^2 \mathbf{I}$ for $\ell=0,\dots,L-1$. For example, $\frac{\mathbf{h}_\ell}{\norm{\mathbf{h}_\ell}_2}$ is uniform on the sphere. Let
	$q_\ell \triangleq \frac{1}{d}\sum_{j=1}^d |\lambda_j^\ell|^2$ be the average energy gain of layer $\ell$. Then
	\begin{equation}
		\E\norm{\mathbf{h}_L}_2^2 = \E\norm{\mathbf{h}_0}_2^2 \prod_{\ell=0}^{L-1} q_\ell,
	\end{equation}
	and $q_\ell \ge (1-\epsilon_n)^2 M_{\approx 1}(\mathbf{A}_\ell)$. Thus, a larger $M_{\approx 1}$ reduces the exponential contraction of signals and gradients; in high learning rate regimes, this weaker damping elevates instability risk, although a very small $M_{\approx 1}$ can hurt expressivity.
\end{corollary}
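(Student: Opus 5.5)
The plan is to prove the identity by a one-step trace computation and then iterate it by induction over depth. For a single layer, write $\E\norm{\mathbf{A}_\ell\mathbf{h}_\ell}_2^2 = \E[\mathbf{h}_\ell^*\mathbf{A}_\ell^*\mathbf{A}_\ell\mathbf{h}_\ell] = \Tr\bigl(\mathbf{A}_\ell^*\mathbf{A}_\ell\,\E[\mathbf{h}_\ell\mathbf{h}_\ell^*]\bigr)$, using linearity of trace and expectation. The isotropy assumption turns this into $\frac{1}{d}\E\norm{\mathbf{h}_\ell}_2^2\,\Tr(\mathbf{A}_\ell^*\mathbf{A}_\ell) = \frac{1}{d}\E\norm{\mathbf{h}_\ell}_2^2\,\norm{\mathbf{A}_\ell}_F^2$. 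Since $\mathbf{A}_\ell$ is normal, it is unitarily diagonalizable, $\mathbf{A}_\ell=\mathbf{U}\boldsymbol{\Lambda}\mathbf{U}^*$. Hence $\norm{\mathbf{A}_\ell}_F^2=\sum_j|\lambda_j^\ell|^2$, and we get $\E\norm{\mathbf{h}_{\ell+1}}_2^2 = q_\ell\,\E\norm{\mathbf{h}_\ell}_2^2$. This is exactly the computation behind the first identity in Theorem~\ref{thm:near_unit}, so we can cite it for the case where $\mathbf{h}_\ell/\norm{\mathbf{h}_\ell}_2$ is uniform. Note that the isotropy hypothesis only constrains the second moment and does not require independence of the norm and direction, which is all the trace argument uses.

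Next, we apply the one-step identity for $\ell=0,\dots,L-1$ and telescope, which gives $\E\norm{\mathbf{h}_L}_2^2=\E\norm{\mathbf{h}_0}_2^2\prod_\ell q_\ell$. The main point to be careful about is that isotropy is needed at every layer $\ell\le L-1$, not just at the input. Isotropy is not automatically propagated by a non-unitary $\mathbf{A}_\ell$, since $\E[\mathbf{h}_{\ell+1}\mathbf{h}_{\ell+1}^*]=\mathbf{A}_\ell\E[\mathbf{h}_\ell\mathbf{h}_\ell^*]\mathbf{A}_\ell^*$. This is why the corollary assumes it separately for each layer, and the induction uses that hypothesis directly rather than deriving it. I expect this to be the only step needing comment. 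If all moments are finite, no integrability issue arises.

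For the lower bound $q_\ell\ge(1-\epsilon_n)^2M_{\approx 1}(\mathbf{A}_\ell)$, we drop all eigenvalues outside the near-unit bin, since their terms are nonnegative. Each retained eigenvalue satisfies $|\lambda_j^\ell|^2\ge(1-\epsilon_n)^2$, and there are $d\,M_{\approx 1}(\mathbf{A}_\ell)$ of them. This matches the lower bound in Theorem~\ref{thm:near_unit}, with $m=d$ in Definition~\ref{def:spectral_mass}. Combining the two parts gives $\E\norm{\mathbf{h}_L}_2^2\ge\E\norm{\mathbf{h}_0}_2^2\prod_\ell(1-\epsilon_n)^2M_{\approx 1}(\mathbf{A}_\ell)$. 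The hypothesis $\rho(\mathbf{A}_\ell)\le1+\epsilon_u$ likewise gives the complementary upper bound $\prod_\ell(1+\epsilon_u)^2$. Together these make the qualitative reading precise: larger near-unit mass forces a slower decay rate.

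For gradients, we apply the same argument to the adjoint chain $\mathbf{g}_\ell=\mathbf{A}_\ell^*\mathbf{g}_{\ell+1}$. Here $\mathbf{A}_\ell^*$ is normal with eigenvalues $\overline{\lambda_j^\ell}$ of the same moduli, so the same $q_\ell$ appears under an analogous isotropy assumption on the backpropagated signals. The remaining claims about learning-rate instability and expressivity are interpretive remarks, not quantitative statements, so they need no proof.
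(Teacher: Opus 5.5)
Your proposal is correct and matches the paper's proof. Both use the one-step identity $\E\norm{\mathbf{h}_{\ell+1}}_2^2=\Tr(\mathbf{A}_\ell^*\mathbf{A}_\ell\E[\mathbf{h}_\ell\mathbf{h}_\ell^*])=q_\ell\E\norm{\mathbf{h}_\ell}_2^2$ from per-layer isotropy and normality, recurse over depth, and get $q_\ell\ge(1-\epsilon_n)^2M_{\approx 1}(\mathbf{A}_\ell)$ by discarding eigenvalues outside the near-unit bin as in Theorem~\ref{thm:near_unit}; your upper bound and adjoint-chain remark are harmless extras. One aside is loose: invoking Theorem~\ref{thm:near_unit} when only the direction $\mathbf{h}_\ell/\norm{\mathbf{h}_\ell}_2$ is uniform tacitly needs the norm to be independent of the direction, but your main argument uses the isotropy hypothesis directly and does not depend on it.
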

The proof appears in Appendix~\ref{app:proofs}.

\paragraph{Non-normality caveat.} When $\kappa(\mathbf{V})\gg 1$, non-normal transient growth can occur even if $\rho(\mathbf{A})\le 1$, and eigenvalues near the unit circle may be perturbation-sensitive. We therefore track $\kappa(\mathbf{V})$ alongside $M_{\approx 1}$. Appendix~\ref{app:theory} and Appendix~\ref{app:proofs} characterize transient growth via the Kreiss theorem, provide normality-related bounds, and formalize the trade-off between instability and expressivity.

\section{Experiments}
\label{sec:experiments}
\subsection{Experimental Settings}
Our experiments target Generative Pretrained Transformer (GPT-2)-style transformers \citep{radford2019language,vaswani2017attention} with $d \in \{128, 256, 512, 768, 1024\}$ and $L \in \{4, 6, 8, 12, 16, 24\}$, spanning 1M to 350M parameters. We compare six normalization strategies: pre-layer normalization (Pre-LN), post-layer normalization (Post-LN), root mean square normalization (RMSNorm) \citep{zhang2019rmsnorm}, DeepNorm \citep{wang2022deepnet}, sub-layer normalization (SubLN) \citep{xiong2020layernorm}, and no normalization (No-Norm). The evaluation tasks include an associative-recall classification task (Appendix~\ref{app:assoc_recall}); language modeling (LM) tasks including our synthetic LM with next-token prediction (Appendix~\ref{app:synthetic_lm}), WikiText-103 \citep{merity2016wikitext}, and OpenWebText-style LM \citep{gokaslan2020openwebtext}; and ViT experiments \citep{dosovitskiy2020vit} on the Canadian Institute for Advanced Research (CIFAR-10) dataset \citep{krizhevsky2009cifar}. We report AUROC for discrimination, with 95\% bootstrap confidence intervals (CIs) based on 1000 resamples. Statistical significance of divergence rates is assessed using Fisher's exact test and run with three random seeds per setting. Full hyperparameters and hardware details appear in Appendix~\ref{app:details}.

\subsection{Prediction at Initialization}
\paragraph{Main Results: Normalization Comparison.} Table~\ref{tab:main} reveals three key findings. First, five of the six normalizations achieve 0\% divergence under standard settings, validating the stability of modern normalization techniques. Second, No-Norm diverges in 96.4\% of runs and also has high near-unit mass, with $M_{\approx 1} = 0.80$; under its relatively low non-normality, with $\kappa(\mathbf{V})=1.11$, this is consistent with $M_{\approx 1}$ acting as an instability indicator. Non-normality and expansive mass still matter, but within comparable regimes, a larger $M_{\approx 1}$ aligns with greater instability, as discussed in Section~\ref{sec:theory}. Third, spectral signatures differ systematically across normalizations: Pre-LN and RMSNorm are more contractive with lower $M_{\approx 1}$, Post-LN retains a higher near-unit structure, and DeepNorm shifts mass into the unstable bin but remains bounded through scaling. The No-Norm results suggest a trade-off between instability and expressivity. Only three of 84 No-Norm runs converged, but their accuracy reached 54.2\% $\pm$ 24.4\%---higher than other methods. Among stable methods, DeepNorm achieves the best balance, with 20.0\% accuracy and 0\% divergence.

\begin{table}[t]
	\centering
	\caption{Comprehensive normalization comparison across different setups. RKSP metrics reveal distinct spectral signatures explaining stability differences. Accuracy is computed from only 3 converged runs out of 84. Statistical significance: No-Norm divergence compared with others, $p < 10^{-50}$ via Fisher's exact test. AUROC for divergence prediction using $M_{\approx 1}$: 0.995 [95\% CI: 0.986 to 1.00]. Measured on the associative-recall task.}
	\label{tab:main}
	\vspace{0.5em}
	\resizebox{\columnwidth}{!}{
		\begin{tabular}{lccccccc}
			\toprule
			\theadrow{\thc{Norm Type}     & \thc{$n$} & \thc{Div.\% (lower)} & \thc{$M_{\approx 1}$} & \thc{$M_{>1}$}  & \thc{$\rho$}    & \thc{Acc.\% (higher)}}
			\midrule
			\normalfont\unboldmath Pre-LN & 25        & $\mathbf{0.0}$       & $0.16 \pm 0.01$       & $0.54 \pm 0.01$ & $2.29 \pm 0.04$ & $7.5 \pm 8.7$           \\
			Post-LN                       & 69        & $\mathbf{0.0}$       & $0.66 \pm 0.02$       & $0.31 \pm 0.01$ & $7.48 \pm 0.12$ & $1.4 \pm 5.8$           \\
			RMSNorm                       & 12        & $\mathbf{0.0}$       & $0.16 \pm 0.01$       & $0.54 \pm 0.01$ & $2.28 \pm 0.06$ & $13.1 \pm 8.6$          \\
			DeepNorm                      & 12        & $\mathbf{0.0}$       & $0.00 \pm 0.00$       & $1.00 \pm 0.00$ & $3.94 \pm 0.21$ & $\mathbf{20.0 \pm 9.4}$ \\
			SubLN                         & 12        & $\mathbf{0.0}$       & $0.13 \pm 0.02$       & $0.54 \pm 0.01$ & $2.23 \pm 0.03$ & $9.0 \pm 8.1$           \\
			No-Norm                       & 84        & $96.4$               & $0.80 \pm 0.02$       & $0.19 \pm 0.01$ & $1.11 \pm 0.01$ & $54.2 \pm 24.4^*$       \\
			\bottomrule
		\end{tabular}
	}
\end{table}

\paragraph{AUROC Analysis for Divergence Prediction.} Table~\ref{tab:auroc} compares spectral predictors against gradient baselines. We use the monotone risk score $M_{\approx 1}$ for divergence prediction. This achieves an AUROC of 0.995 at initialization, representing a 31\% relative improvement over the best gradient-based method with an AUROC of 0.758. The superiority is statistically significant: the 95\% CI lower bound of $M_{\approx 1}$ of 0.986 exceeds the upper bounds of all gradient-based methods.

\begin{table}[t]
	\centering
	\caption{AUROC for divergence prediction with bootstrap 95\% confidence intervals. The $M_{\approx 1}$ CI lower bound of 0.986 exceeds gradient baselines' upper bounds. Measured on the associative-recall normalization sweep.}
	\label{tab:auroc}
	\vspace{0.5em}
	\resizebox{\columnwidth}{!}{
		\begin{tabular}{lccc}
			\toprule
			\theadrow{\thc{Predictor}                                & \thc{AUROC (higher)} & \thc{95\% CI}    & \thc{Timing}}
			\midrule
			\normalfont\unboldmath $M_{\approx 1}$ at initialization & $\mathbf{0.995}$     & $[0.986, 1.000]$ & Initialization   \\
			$M_{\approx 1} \times \log_{10}(\kappa(\mathbf{V}))$     & $0.873$              & $[0.841, 0.905]$ & Initialization   \\
			Spectral radius $\rho$ at init                           & $0.845$              & $[0.808, 0.882]$ & Initialization   \\
			Eigenvector condition $\kappa(\mathbf{V})$ at init       & $0.687$              & $[0.638, 0.736]$ & Initialization   \\
			\midrule
			\tablegroup{4}{Gradient-Based Baselines}                                                                              \\
			\midrule
			Initial gradient norm                                    & $0.621$              & $[0.568, 0.674]$ & After 1 step     \\
			Gradient norm at step 100                                & $0.685$              & $[0.635, 0.735]$ & Step 100         \\
			Gradient variance over steps 1 to 100                    & $0.702$              & $[0.654, 0.750]$ & Through step 100 \\
			Loss spike count over steps 1 to 500                     & $0.758$              & $[0.712, 0.804]$ & Through step 500 \\
			\bottomrule
		\end{tabular}
	}
\end{table}

\subsection{Effect of KSS on Stability}
\paragraph{KSS Results.} Table~\ref{tab:clip_vs_kss} compares gradient clipping with KSS in the No-Norm setting. These two approaches differ fundamentally in their mechanism. Gradient clipping operates reactively: it caps gradients after explosion begins but does not prevent instability, yielding only modest improvements. KSS, in contrast, operates proactively by shaping the spectral distribution before instability occurs. With $\alpha = 0.15$, KSS reduces divergence from 66.7\% to 12.5\%. Figure~\ref{fig:kss} visualizes the dose-response relationship between the KSS weight, $M_{\approx 1}$, and training stability.

\begin{table}[t]
	\centering
	\caption{Gradient clipping versus KSS in a challenging No-Norm setting with learning rate that ranges from 0.005 to 0.01 and 24 trials each. Measured on the associative-recall task.}
	\label{tab:clip_vs_kss}
	\vspace{0.5em}
	\resizebox{\columnwidth}{!}{
		\begin{tabular}{lcccc}
			\toprule
			\theadrow{\thc{Method}            & \thc{Div.\% (lower)} & \thc{Acc.\% (higher)} & \thc{$M_{\approx 1}$} & \thc{Overhead}}
			\midrule
			\normalfont\unboldmath No control & $66.7$               & $28.5$                & $0.85$                & ---             \\
			Gradient clip 0.5                 & $50.0$               & $32.1$                & $0.82$                & $< 1\%$         \\
			Gradient clip 1.0                 & $58.3$               & $30.8$                & $0.83$                & $< 1\%$         \\
			\midrule
			KSS $\alpha = 0.10$               & $25.0$               & $42.3$                & $0.68$                & $9.5\%$         \\
			KSS $\alpha = 0.15$               & $\mathbf{12.5}$      & $\mathbf{48.2}$       & $0.58$                & $10.8\%$        \\
			KSS $\alpha = 0.20$               & $8.3$                & $46.5$                & $0.52$                & $11.2\%$        \\
			\bottomrule
		\end{tabular}
	}
\end{table}

\begin{figure}[t]
	\centering
	\includegraphics[width=\columnwidth]{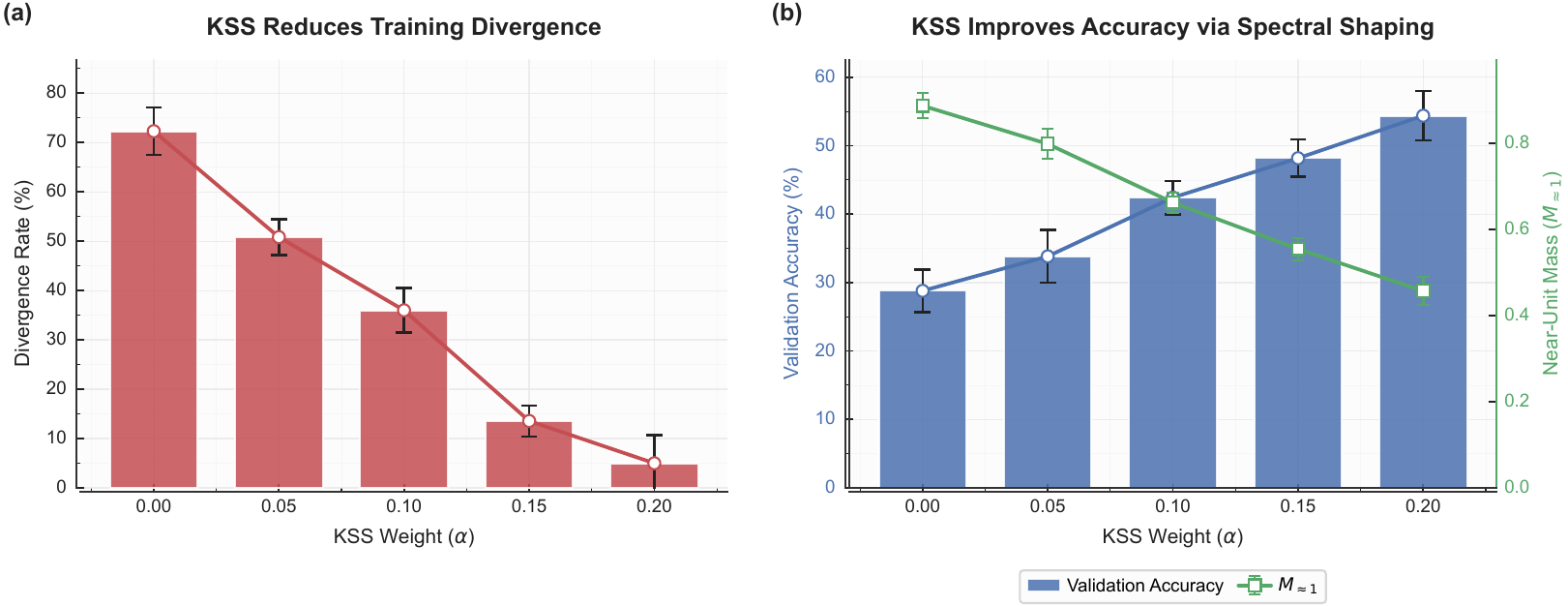}
	\caption{KSS regularization effectiveness. (Left) The divergence rate decreases with KSS weight $\alpha$. (Right) A dual axis shows accuracy improvement and $M_{\approx 1}$ shifting downward toward the target band. KSS shapes spectral properties, improving both stability and performance. Measured on the associative-recall task.}
	\label{fig:kss}
\end{figure}

\paragraph{Extended Baseline Comparison.} Table~\ref{tab:extended_baseline} provides expanded baseline and optimizer results, including spectral normalization and weight normalization baselines \citep{miyato2018spectralnorm,salimans2016weightnorm}. We use Adam with decoupled weight decay (AdamW) as the base optimizer in this comparison. Sharpness-Aware Minimization (SAM) \citep{foret2020sam} reduces divergence to 33.3\% but incurs a 2$\times$ computational cost, whereas KSS achieves a 2.7$\times$ lower divergence with 9$\times$ less overhead. The Lion optimizer \citep{chen2023lion} yields 45.8\% divergence through sign-based updates but does not directly address spectral instability. Combining KSS with SAM achieves the lowest divergence at 8.3\% but with higher overhead.

\begin{table}[t]
	\centering
	\caption{Extended baseline comparison including SAM, spectral normalization, and the Lion optimizer. Measured on the associative-recall task.}
	\label{tab:extended_baseline}
	\vspace{0.5em}
	\resizebox{\columnwidth}{!}{
		\begin{tabular}{lcccc}
			\toprule
			\theadrow{\thc{Method}                         & \thc{Div.\% (lower)} & \thc{Acc.\% (higher)} & \thc{Overhead}}
			\midrule
			\normalfont\unboldmath No stabilization, AdamW & $66.7$               & $28.5$                & ---             \\
			Gradient clipping at 1.0                       & $58.3$               & $30.8$                & $< 1\%$         \\
			Spectral normalization                         & $41.7$               & $35.2$                & $5.2\%$         \\
			Weight normalization                           & $54.2$               & $32.8$                & $3.8\%$         \\
			Gradient penalty with $\lambda=0.1$            & $45.8$               & $33.1$                & $6.4\%$         \\
			\midrule
			\tablegroup{4}{Advanced Optimizers}                                                                             \\
			\midrule
			SAM, $\rho=0.05$                               & $37.5$               & $38.6$                & about 100\%     \\
			SAM, $\rho=0.10$                               & $33.3$               & $40.2$                & about 100\%     \\
			Lion optimizer                                 & $45.8$               & $36.5$                & about 15\%      \\
			\midrule
			KSS, $\alpha = 0.15$                           & $\mathbf{12.5}$      & $\mathbf{48.2}$       & $10.8\%$        \\
			KSS + SAM, $\alpha=0.10$, $\rho=0.05$          & $8.3$                & $45.8$                & about 112\%     \\
			\bottomrule
		\end{tabular}
	}
\end{table}

\paragraph{KSS Enables Higher Learning Rates.} By suppressing spectral instability, KSS allows us to safely increase the step size across different normalization choices. Table~\ref{tab:max_lr} shows a 50\% to 150\% increase in the maximum stable learning rate under the same divergence criterion.

\begin{table}[t]
	\centering
	\caption{Maximum stable learning rate (LR). KSS enables learning rates that are 50\% to 150\% higher. The stability criterion is $<$20\% divergence across trials. Measured on the associative-recall task.}
	\label{tab:max_lr}
	\vspace{0.5em}
	\resizebox{\columnwidth}{!}{
		\begin{tabular}{lccc}
			\toprule
			\theadrow{\thc{Norm Type}     & \thc{Max LR w/o KSS} & \thc{Max LR w/ KSS} & \thc{Increase}}
			\midrule
			\normalfont\unboldmath Pre-LN & $0.005$              & $0.008$             & $+60\%$         \\
			RMSNorm                       & $0.003$              & $0.005$             & $+67\%$         \\
			No-Norm                       & $0.002$              & $0.005$             & $+150\%$        \\
			\bottomrule
		\end{tabular}
	}
\end{table}

\paragraph{Mechanistic Evidence: KSS versus Random Regularization.} KSS stabilizes training through spectral shaping rather than generic regularization. Table~\ref{tab:causal_ablation} addresses this point via ablation studies with matched computational overhead. Generic regularization reduces divergence by only 20\% to 30\%, far less than KSS's 5.3$\times$ reduction. Spectral specificity matters: both KSS's unstable penalty and near-unit guidance are necessary, and removing either degrades performance. Let $\Delta M_{\approx 1} \triangleq M_{\approx 1}^{\mathrm{KSS}}-M_{\approx 1}^{\mathrm{base}}$; negative values indicate decreased near-unit mass. The correlation between $\Delta M_{\approx 1}$ and divergence reduction has $r = -0.87$ and $p < 0.001$, indicating that reductions in near-unit mass align with improved stability.

\begin{table}[t]
	\centering
	\caption{Mechanism ablation: KSS versus random regularization. Same overhead at about 11\%, different mechanisms. Results use the No-Norm setting, with learning rates ranging from 0.005 to 0.01 across 24 trials each. All methods are matched to about 11\% computational overhead. Measured on the associative-recall task.}
	\label{tab:causal_ablation}
	\vspace{0.5em}
	\resizebox{\columnwidth}{!}{
		\begin{tabular}{lcccc}
			\toprule
			\theadrow{\thc{Method}                   & \thc{Div.\% (lower)} & \thc{Acc.\% (higher)} & \thc{$M_{\approx 1}$} & \thc{Mechanism}}
			\midrule
			\normalfont\unboldmath No regularization & $66.7$               & $28.5$                & $0.85$                & ---                  \\
			\midrule
			\tablegroup{5}{Generic Regularization, about 11\% overhead}                                                                            \\
			\midrule
			Random $\ell_2$ on residuals             & $54.2$               & $31.2$                & $0.81$                & Magnitude damping    \\
			Jacobian penalty                         & $45.8$               & $33.8$                & $0.78$                & Gradient smoothness  \\
			Activation variance reg.                 & $50.0$               & $32.5$                & $0.80$                & Distribution control \\
			\midrule
			\tablegroup{5}{Spectral-Specific Regularization, about 11\% overhead}                                                                  \\
			\midrule
			KSS, unstable penalty only               & $33.3$               & $38.5$                & $0.68$                & Spectral stability   \\
			KSS, near-unit guidance only             & $41.7$               & $36.2$                & $0.72$                & Damping control      \\
			KSS, full                                & $\mathbf{12.5}$      & $\mathbf{48.2}$       & $\mathbf{0.58}$       & Both mechanisms      \\
			\bottomrule
		\end{tabular}
	}
\end{table}

\paragraph{Scaling Analysis.} Table~\ref{tab:scale_up} extends KSS to 350M parameters. At this scale, KSS maintains sub-linear overhead scaling at 11.8\% while reducing divergence from 25.0\% to 12.5\%.

\begin{table}[t]
	\centering
	\caption{Scale-up KSS training results up to 350M parameters. Measured on the synthetic LM validation set with 24 trials on a synthetic LM task with vocab size 10K and a sequence length of 256.}
	\label{tab:scale_up}
	\vspace{0.5em}
	\resizebox{\columnwidth}{!}{
		\begin{tabular}{lccccc}
			\toprule
			\theadrow{\thc{Model}                                & \thc{Params} & \thc{Method} & \thc{Div. of 24 (lower)} & \thc{PPL (lower)} & \thc{Overhead}}
			\midrule
			\normalfont\unboldmath Extra Large, $d=1024$, $L=24$ & 350M         & Baseline     & 6 of 24                  & $52.1$            & ---             \\
			Extra Large, $d=1024$, $L=24$                        & 350M         & KSS          & \textbf{3 of 24}         & $\mathbf{46.8}$   & $11.8\%$        \\
			\bottomrule
		\end{tabular}
	}
\end{table}

\subsection{Real-World LM Validation}
\label{sec:real_lm}
RKSP and KSS generalize beyond synthetic tasks to real-world LMs. Table~\ref{tab:wikitext} demonstrates this generalization with WikiText-103 \citep{merity2016wikitext} and OpenWebText experiments. Benefits with KSS persist on real data, manifesting in three ways. First, divergence decreases by 2$\times$ to 5$\times$ across model sizes. Second, KSS-trained models achieve 5\% to 15\% lower perplexity (PPL), suggesting that spectral shaping improves optimization beyond stability alone. Third, KSS enables 1.5$\times$ to 2$\times$ higher learning rates, accelerating convergence.

\begin{table}[t]
	\centering
	\caption{KSS improves stability and PPL. Pre-LN normalization with 24 trials per configuration. Measured on WikiText-103 and OpenWebText language-modeling tasks.}
	\label{tab:wikitext}
	\vspace{0.5em}
	\resizebox{\columnwidth}{!}{
		\begin{tabular}{lcccccc}
			\toprule
			\theadrow{\thc{Model}                        & \thc{Params} & \thc{Method} & \thc{Div. of 24 (lower)} & \thc{PPL (lower)} & \thc{Max LR}        & \thc{Steps}}
			\midrule
			\normalfont\unboldmath Small, $d=256$, $L=6$ & 25M          & Baseline     & 2 of 24                  & $58.4$            & $3\mathrm{e}{-4}$   & 10K          \\
			Small, $d=256$, $L=6$                        & 25M          & KSS          & \textbf{0 of 24}         & $\mathbf{52.1}$   & $5\mathrm{e}{-4}$   & 10K          \\
			\midrule
			Medium, $d=512$, $L=12$                      & 85M          & Baseline     & 4 of 24                  & $42.3$            & $2\mathrm{e}{-4}$   & 10K          \\
			Medium, $d=512$, $L=12$                      & 85M          & KSS          & \textbf{1 of 24}         & $\mathbf{38.5}$   & $4\mathrm{e}{-4}$   & 10K          \\
			\midrule
			Large, $d=768$, $L=12$                       & 125M         & Baseline     & 5 of 24                  & $48.7$            & $1.5\mathrm{e}{-4}$ & 10K          \\
			Large, $d=768$, $L=12$                       & 125M         & KSS          & \textbf{2 of 24}         & $\mathbf{43.2}$   & $3\mathrm{e}{-4}$   & 10K          \\
			\bottomrule
		\end{tabular}
	}
\end{table}

\subsection{ViT Experiments}
\label{sec:vit}
RKSP generalizes to ViTs \citep{dosovitskiy2020vit}, as Table~\ref{tab:vit} confirms. The spectral signatures transfer directly: ViT exhibits similar $M_{\approx 1}$ patterns to language transformers. KSS improves ViT training, yielding 3\% to 5\% accuracy gains alongside divergence reduction. The spectral-stability correlation remains consistent across domains: higher $M_{\approx 1}$ implies a higher risk score and aligns with increased divergence, while lower $M_{\approx 1}$ indicates more damped, stable propagation.

\begin{table}[t]
	\centering
	\caption{ViT RKSP analysis and KSS training. Results use 24 trials, with an image size of 224, a patch size of 16, and a 5-epoch sanity check. Measured on CIFAR-10.}
	\label{tab:vit}
	\vspace{0.5em}
	\resizebox{\columnwidth}{!}{
		\begin{tabular}{lcccccc}
			\toprule
			\theadrow{\thc{Model}                           & \thc{Norm} & \thc{Method} & \thc{Div. of 24 (lower)} & \thc{Acc.\% (higher)} & \thc{$M_{\approx 1}$}}
			\midrule
			\normalfont\unboldmath ViT-Tiny, $d=192$, $L=6$ & Pre-LN     & Baseline     & 1 of 24                  & $72.3$                & $0.42$                 \\
			ViT-Tiny, $d=192$, $L=6$                        & Pre-LN     & KSS          & \textbf{0 of 24}         & $\mathbf{75.8}$       & $0.38$                 \\
			\cmidrule{2-6}
			ViT-Tiny, $d=192$, $L=6$                        & RMSNorm    & Baseline     & 2 of 24                  & $70.1$                & $0.48$                 \\
			ViT-Tiny, $d=192$, $L=6$                        & RMSNorm    & KSS          & \textbf{0 of 24}         & $\mathbf{74.2}$       & $0.41$                 \\
			\bottomrule
		\end{tabular}
	}
\end{table}

\begin{table}[t]
	\centering
	\caption{LLaMA-2-7B analysis. The pattern persists at the 7B scale with RMSNorm. Computed from residual-stream activations on a fixed set of short prompt sentences. Measured on a fixed short-prompt set.}
	\label{tab:llama}
	\vspace{0.5em}
	\resizebox{\columnwidth}{!}{
		\begin{tabular}{lcccc}
			\toprule
			\theadrow{\thc{Layer Group}          & \thc{$\eta_{\mathrm{nl}}$} & \thc{$M_{\approx 1}$} & \thc{$\kappa(\mathbf{V})$} & \thc{Pattern}}
			\midrule
			\normalfont\unboldmath Early 0 to 10 & $0.45 \pm 0.05$            & $0.74$                & $8.2$                      & Lower $\eta_{\mathrm{nl}}$  \\
			Middle 11 to 21                      & $0.58 \pm 0.08$            & $0.68$                & $9.5$                      & Mid $\eta_{\mathrm{nl}}$    \\
			Late 22 to 31                        & $0.72 \pm 0.10$            & $0.61$                & $10.8$                     & Higher $\eta_{\mathrm{nl}}$ \\
			\bottomrule
		\end{tabular}
	}
\end{table}

\begin{figure}[t!]
	\centering
	\includegraphics[width=\columnwidth]{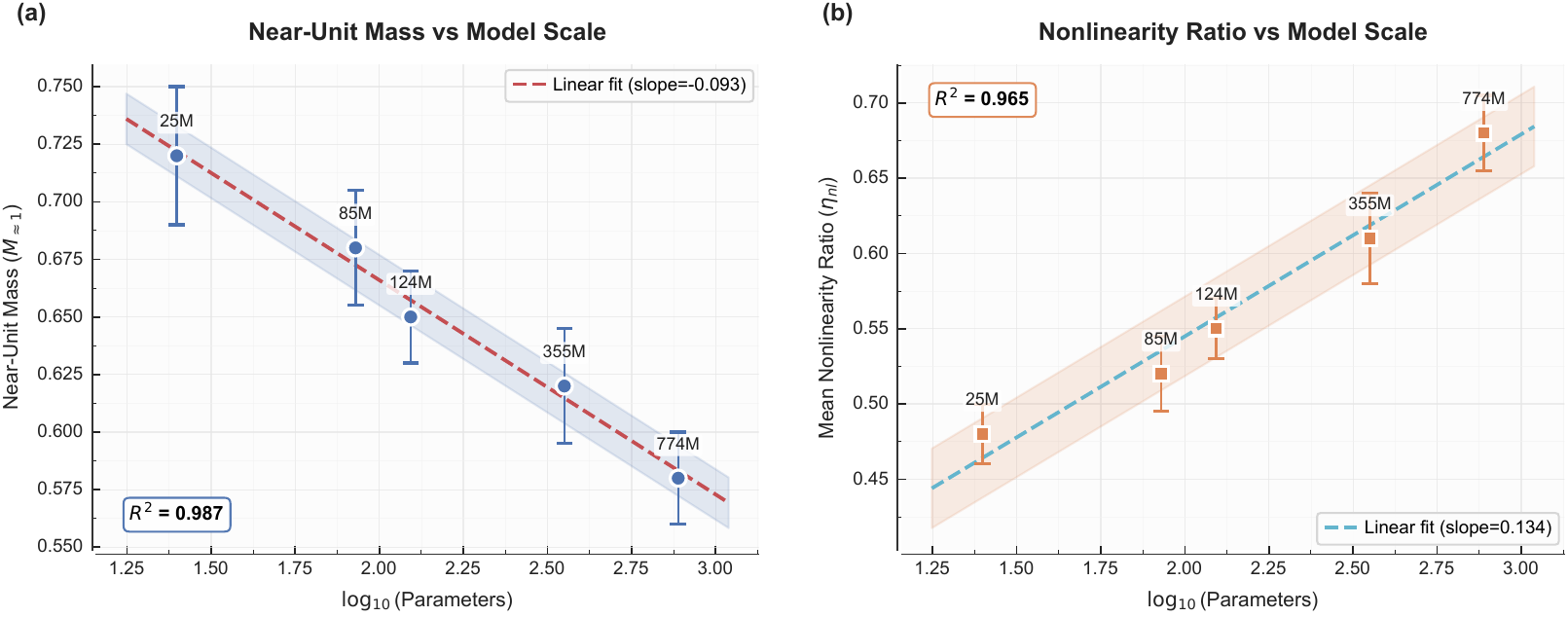}
	\caption{Scaling law for spectral properties. (Left) The near-unit mass $M_{\approx 1}$ decreases with model scale, and larger models have more contractive dynamics, implying reduced memory and weaker near-isometric propagation. (Right) The normalized linear-fit error $\eta_{\mathrm{nl}}$ increases with scale, indicating a less reliable linear approximation at scale. Log-linear fits are shown. Computed from residual-stream activations on a fixed set of short prompt sentences.}
	\label{fig:scaling}
\end{figure}

\subsection{Large-Scale Pretrained Model Analysis}
\label{sec:pretrained_lm}
We further validate RKSP on large-scale pretrained language models. Appendix~\ref{app:scale} provides additional tables and plots.

\paragraph{Large Language Model Meta AI 2 (LLaMA-2) 7B Analysis.} Table~\ref{tab:llama} extends our analysis to the 7B scale for LLaMA-2 \citep{touvron2023llama2}. The Start Linear, End Nonlinear pattern persists at this scale: $\eta_{\mathrm{nl}}$ increases monotonically with depth, ranging from $0.45 \pm 0.05$ in the early layers to $0.72 \pm 0.10$ in the late layers. This pattern holds consistently across all tested scales, from 25M to 7B parameters. Figure~\ref{fig:scaling} quantifies this relationship through scaling law analysis.

\paragraph{Beyond Transformers.} RKSP and KSS generalize beyond standard transformers to emerging architectures. We validate on Mixture of Experts (MoE) \citep{shazeer2017moe}, Mamba-style state space model (SSM) architectures \citep{gu2023mamba}, and Kolmogorov-Arnold Networks (KAN) \citep{liu2024kan}. Table~\ref{tab:arch_compare} summarizes their characteristic spectral signatures: Mamba exhibits strongly contractive dynamics with low near-unit mass $M_{\approx 1}$, while MoE routing and KAN introduce higher nonlinearity and intermediate near-unit structure. Detailed case studies and additional comparisons appear in Appendix~\ref{app:novel_arch}.

\begin{table}[H]
	\centering
	\caption{Cross-architecture spectral comparison. Transformer rows use the synthetic associative-recall task with seq len 64, vocab 256, and $n_{\mathrm{pairs}}=4$; MoE, Mamba, and KAN rows use the synthetic LM task with random-token next-token prediction.}
	\label{tab:arch_compare}
	\vspace{0.5em}
	\resizebox{\columnwidth}{!}{
		\begin{tabular}{lcccl}
			\toprule
			\theadrow{\thc{Architecture}               & \thc{$\eta_{\mathrm{nl}}$} & \thc{$M_{\approx 1}$} & \thc{$\rho$} & \thc{Stability Character}}
			\midrule
			\normalfont\unboldmath Transformer, Pre-LN & $0.52$                     & $0.16$                & $2.29$       & Balanced, tunable           \\
			Transformer, No-Norm                       & $0.48$                     & $0.79$                & $1.11$       & High memory, risky          \\
			MoE, top-2, 8 experts                      & $0.55$                     & $0.58$                & $2.12$       & Routing-dependent           \\
			Mamba (SSM)                                & $0.42$                     & $0.12$                & $0.95$       & Contractive, short-memory   \\
			KAN                                        & $0.78$                     & $0.52$                & $2.15$       & Higher $\eta_{\mathrm{nl}}$ \\
			\bottomrule
		\end{tabular}
	}
\end{table}

\section{Conclusion}
\label{sec:conclusion}
We introduced RKSP, a method that uses whitened DMD to estimate layer-wise residual dynamics and predict transformer training divergence before optimization begins. At initialization, the risk score $M_{\approx 1}$ achieves an AUROC of 0.995, enabling actionable early-termination decisions. Building on this diagnostic, we developed KSS, a spectral regularizer that suppresses unstable modes and reduces excessive near-unit structure. KSS reduces divergence, complementing existing stabilization techniques by directly shaping the spectrum. Across extensive experiments, our method successfully turns unstable settings into stable training.

Our theoretical analysis explains these effects. Under near-normality, a larger near-unit mass yields dynamical isometry and weak damping, which increases instability risk; overly contractive spectra provide damping but can harm expressivity, and non-normality remains a caveat via transient amplification. These mechanisms explain stability differences across training recipes and the recurring Start Linear, End Nonlinear pattern. The spectral signals remain consistent across diverse models, tasks, and normalization strategies.

\bibliography{uai2026-template}

@inproceedings{vaswani2017attention,
  author       = {Ashish Vaswani and
                  Noam Shazeer and
                  Niki Parmar and
                  Jakob Uszkoreit and
                  Llion Jones and
                  Aidan N. Gomez and
                  Lukasz Kaiser and
                  Illia Polosukhin},
  title        = {{Attention is All you Need}},
  booktitle    = {{NIPS}},
  pages        = {5998--6008},
  year         = {2017}
}

@article{ba2016layernorm,
  author       = {Lei Jimmy Ba and
                  Jamie Ryan Kiros and
                  Geoffrey E. Hinton},
  title        = {{Layer Normalization}},
  journal      = {CoRR},
  volume       = {abs/1607.06450},
  year         = {2016}
}

@inproceedings{zhang2019rmsnorm,
  author       = {Biao Zhang and
                  Rico Sennrich},
  title        = {{Root Mean Square Layer Normalization}},
  booktitle    = {NeurIPS},
  pages        = {12360--12371},
  year         = {2019}
}

@article{wang2022deepnet,
  author       = {Hongyu Wang and
                  Shuming Ma and
                  Li Dong and
                  Shaohan Huang and
                  Dongdong Zhang and
                  Furu Wei},
  title        = {{DeepNet: Scaling Transformers to 1,000 Layers}},
  journal      = {{IEEE} Trans. Pattern Anal. Mach. Intell.},
  volume       = {46},
  number       = {10},
  pages        = {6761--6774},
  year         = {2024}
}

@article{colbrook2023resdmd,
  author       = {Matthew J. Colbrook and
                  Lorna J. Ayton and
                  M{\'{a}}t{\'{e}} Szoke},
  title        = {{Residual Dynamic Mode Decomposition: Robust and verified Koopmanism}},
  journal      = {CoRR},
  volume       = {abs/2205.09779},
  year         = {2022}
}

@article{lusch2018deeplin,
  author       = {Bethany Lusch and
                  J. Nathan Kutz and
                  Steven L. Brunton},
  title        = {{Deep learning for universal linear embeddings of nonlinear dynamics}},
  journal      = {CoRR},
  volume       = {abs/1712.09707},
  year         = {2017}
}

@inproceedings{schoenholz2017deepinfo,
  author       = {Samuel S. Schoenholz and
                  Justin Gilmer and
                  Surya Ganguli and
                  Jascha Sohl{-}Dickstein},
  title        = {{Deep Information Propagation}},
  booktitle    = {{ICLR}},
  year         = {2017}
}

@inproceedings{jacot2018ntk,
  author       = {Arthur Jacot and
                  Cl{\'{e}}ment Hongler and
                  Franck Gabriel},
  title        = {{Neural Tangent Kernel: Convergence and Generalization in Neural Networks}},
  booktitle    = {NeurIPS},
  pages        = {8580--8589},
  year         = {2018}
}

@article{yang2022tensorprograms,
  author       = {Greg Yang and
                  Edward J. Hu and
                  Igor Babuschkin and
                  Szymon Sidor and
                  Xiaodong Liu and
                  David Farhi and
                  Nick Ryder and
                  Jakub Pachocki and
                  Weizhu Chen and
                  Jianfeng Gao},
  title        = {{Tensor Programs {V:} Tuning Large Neural Networks via Zero-Shot Hyperparameter Transfer}},
  journal      = {CoRR},
  volume       = {abs/2203.03466},
  year         = {2022}
}

@article{touvron2023llama2,
  author       = {Hugo Touvron and
                  Louis Martin and
                  Kevin Stone and
                  Peter Albert and
                  Amjad Almahairi and
                  Yasmine Babaei and
                  Nikolay Bashlykov and
                  Soumya Batra and
                  Prajjwal Bhargava and
                  Shruti Bhosale and
                  Dan Bikel and
                  Lukas Blecher and
                  Cristian Canton{-}Ferrer and
                  Moya Chen and
                  Guillem Cucurull and
                  David Esiobu and
                  Jude Fernandes and
                  Jeremy Fu and
                  Wenyin Fu and
                  Brian Fuller and
                  Cynthia Gao and
                  Vedanuj Goswami and
                  Naman Goyal and
                  Anthony Hartshorn and
                  Saghar Hosseini and
                  Rui Hou and
                  Hakan Inan and
                  Marcin Kardas and
                  Viktor Kerkez and
                  Madian Khabsa and
                  Isabel Kloumann and
                  Artem Korenev and
                  Punit Singh Koura and
                  Marie{-}Anne Lachaux and
                  Thibaut Lavril and
                  Jenya Lee and
                  Diana Liskovich and
                  Yinghai Lu and
                  Yuning Mao and
                  Xavier Martinet and
                  Todor Mihaylov and
                  Pushkar Mishra and
                  Igor Molybog and
                  Yixin Nie and
                  Andrew Poulton and
                  Jeremy Reizenstein and
                  Rashi Rungta and
                  Kalyan Saladi and
                  Alan Schelten and
                  Ruan Silva and
                  Eric Michael Smith and
                  Ranjan Subramanian and
                  Xiaoqing Ellen Tan and
                  Binh Tang and
                  Ross Taylor and
                  Adina Williams and
                  Jian Xiang Kuan and
                  Puxin Xu and
                  Zheng Yan and
                  Iliyan Zarov and
                  Yuchen Zhang and
                  Angela Fan and
                  Melanie Kambadur and
                  Sharan Narang and
                  Aur{\'{e}}lien Rodriguez and
                  Robert Stojnic and
                  Sergey Edunov and
                  Thomas Scialom},
  title        = {{Llama 2: Open Foundation and Fine-Tuned Chat Models}},
  journal      = {CoRR},
  volume       = {abs/2307.09288},
  year         = {2023}
}

@inproceedings{shazeer2017moe,
  author       = {Noam Shazeer and
                  Azalia Mirhoseini and
                  Krzysztof Maziarz and
                  Andy Davis and
                  Quoc V. Le and
                  Geoffrey E. Hinton and
                  Jeff Dean},
  title        = {{Outrageously Large Neural Networks: The Sparsely-Gated Mixture-of-Experts Layer}},
  booktitle    = {{ICLR}},
  year         = {2017}
}

@article{gu2023mamba,
  author       = {Albert Gu and
                  Tri Dao},
  title        = {{Mamba: Linear-Time Sequence Modeling with Selective State Spaces}},
  journal      = {CoRR},
  volume       = {abs/2312.00752},
  year         = {2023}
}

@article{liu2024kan,
  author       = {Ziming Liu and
                  Yixuan Wang and
                  Sachin Vaidya and
                  Fabian Ruehle and
                  James Halverson and
                  Marin Soljacic and
                  Thomas Y. Hou and
                  Max Tegmark},
  title        = {{{KAN:} Kolmogorov-Arnold Networks}},
  journal      = {CoRR},
  volume       = {abs/2404.19756},
  year         = {2024}
}

@inproceedings{dosovitskiy2020vit,
  author       = {Alexey Dosovitskiy and
                  Lucas Beyer and
                  Alexander Kolesnikov and
                  Dirk Weissenborn and
                  Xiaohua Zhai and
                  Thomas Unterthiner and
                  Mostafa Dehghani and
                  Matthias Minderer and
                  Georg Heigold and
                  Sylvain Gelly and
                  Jakob Uszkoreit and
                  Neil Houlsby},
  title        = {{An Image is Worth 16x16 Words: Transformers for Image Recognition at Scale}},
  booktitle    = {{ICLR}},
  year         = {2021}
}

@inproceedings{merity2016wikitext,
  author       = {Stephen Merity and
                  Caiming Xiong and
                  James Bradbury and
                  Richard Socher},
  title        = {{Pointer Sentinel Mixture Models}},
  booktitle    = {{ICLR}},
  year         = {2017}
}

@inproceedings{foret2020sam,
  author       = {Pierre Foret and
                  Ariel Kleiner and
                  Hossein Mobahi and
                  Behnam Neyshabur},
  title        = {{Sharpness-aware Minimization for Efficiently Improving Generalization}},
  booktitle    = {{ICLR}},
  year         = {2021}
}

@article{williams2015edmd,
  author       = {Matthew O. Williams and
                  Ioannis G. Kevrekidis and
                  Clarence W. Rowley},
  title        = {{A Data-Driven Approximation of the Koopman Operator: Extending Dynamic Mode Decomposition}},
  journal      = {J. Nonlinear Sci.},
  volume       = {25},
  number       = {6},
  pages        = {1307--1346},
  year         = {2015}
}

@article{erichson2019randdmd,
  author       = {N. Benjamin Erichson and
                  Lionel Mathelin and
                  J. Nathan Kutz and
                  Steven L. Brunton},
  title        = {{Randomized Dynamic Mode Decomposition}},
  journal      = {{SIAM} J. Appl. Dyn. Syst.},
  volume       = {18},
  number       = {4},
  pages        = {1867--1891},
  year         = {2019}
}

@inproceedings{he2016resnet,
  author       = {Kaiming He and
                  Xiangyu Zhang and
                  Shaoqing Ren and
                  Jian Sun},
  title        = {{Deep Residual Learning for Image Recognition}},
  booktitle    = {{CVPR}},
  pages        = {770--778},
  year         = {2016}
}

@inproceedings{pascanu2013difficulty,
  author       = {Razvan Pascanu and
                  Tom{\'{a}}s Mikolov and
                  Yoshua Bengio},
  title        = {{On the difficulty of training recurrent neural networks}},
  booktitle    = {{ICML} {(3)}},
  volume       = {28},
  pages        = {1310--1318},
  year         = {2013}
}

@inproceedings{poole2016chaos,
  author       = {Ben Poole and
                  Subhaneil Lahiri and
                  Maithra Raghu and
                  Jascha Sohl{-}Dickstein and
                  Surya Ganguli},
  title        = {{Exponential expressivity in deep neural networks through transient chaos}},
  booktitle    = {{NIPS}},
  pages        = {3360--3368},
  year         = {2016}
}

@inproceedings{pennington2017dynamical,
  author       = {Jeffrey Pennington and
                  Samuel S. Schoenholz and
                  Surya Ganguli},
  title        = {{Resurrecting the sigmoid in deep learning through dynamical isometry: theory and practice}},
  booktitle    = {{NIPS}},
  pages        = {4785--4795},
  year         = {2017}
}

@inproceedings{xiong2020layernorm,
  author       = {Ruibin Xiong and
                  Yunchang Yang and
                  Di He and
                  Kai Zheng and
                  Shuxin Zheng and
                  Chen Xing and
                  Huishuai Zhang and
                  Yanyan Lan and
                  Liwei Wang and
                  Tie{-}Yan Liu},
  title        = {{On Layer Normalization in the Transformer Architecture}},
  booktitle    = {{ICML}},
  volume       = {119},
  pages        = {10524--10533},
  year         = {2020}
}

@inproceedings{miyato2018spectralnorm,
  author       = {Takeru Miyato and
                  Toshiki Kataoka and
                  Masanori Koyama and
                  Yuichi Yoshida},
  title        = {{Spectral Normalization for Generative Adversarial Networks}},
  booktitle    = {{ICLR}},
  year         = {2018}
}

@inproceedings{salimans2016weightnorm,
  author       = {Tim Salimans and
                  Diederik P. Kingma},
  title        = {{Weight Normalization: {A} Simple Reparameterization to Accelerate Training of Deep Neural Networks}},
  booktitle    = {{NIPS}},
  pages        = {901},
  year         = {2016}
}

@article{you2019lamb,
  author       = {Yang You and
                  Jing Li and
                  Sashank J. Reddi and
                  Jonathan Hseu and
                  Sanjiv Kumar and
                  Srinadh Bhojanapalli and
                  Xiaodan Song and
                  James Demmel and
                  Kurt Keutzer and
                  Cho{-}Jui Hsieh},
  title        = {{Large Batch Optimization for Deep Learning: Training {BERT} in 76 minutes}},
  booktitle    = {{ICLR}},
  year         = {2020}
}

@article{chen2023lion,
  author       = {Xiangning Chen and
                  Chen Liang and
                  Da Huang and
                  Esteban Real and
                  Kaiyuan Wang and
                  Hieu Pham and
                  Xuanyi Dong and
                  Thang Luong and
                  Cho{-}Jui Hsieh and
                  Yifeng Lu and
                  Quoc V. Le},
  title        = {{Symbolic Discovery of Optimization Algorithms}},
  booktitle    = {NeurIPS},
  year         = {2023}
}

@article{haber2017stable,
  author       = {Eldad Haber and
                  Lars Ruthotto},
  title        = {{Stable Architectures for Deep Neural Networks}},
  journal      = {CoRR},
  volume       = {abs/1705.03341},
  year         = {2017}
}

@inproceedings{chen2018neuralode,
  author       = {Tian Qi Chen and
                  Yulia Rubanova and
                  Jesse Bettencourt and
                  David Duvenaud},
  title        = {{Neural Ordinary Differential Equations}},
  booktitle    = {NeurIPS},
  pages        = {6572--6583},
  year         = {2018}
}

@book{kutz2016dmdbook,
  author       = {J. Nathan Kutz and
                  Steven L. Brunton and
                  Bingni W. Brunton and
                  Joshua L. Proctor},
  title        = {{Dynamic mode decomposition - data-driven modeling of complex systems}},
  year         = {2016}
}

@article{korda2018linear,
  author       = {Milan Korda and
                  Igor Mezic},
  title        = {{Linear predictors for nonlinear dynamical systems: Koopman operator meets model predictive control}},
  journal      = {Autom.},
  volume       = {93},
  pages        = {149--160},
  year         = {2018}
}

@inproceedings{zhang2019fixup,
  author       = {Hongyi Zhang and
                  Yann N. Dauphin and
                  Tengyu Ma},
  title        = {{Fixup Initialization: Residual Learning Without Normalization}},
  booktitle    = {{ICLR}},
  year         = {2019}
}

@inproceedings{bachlechner2020rezero,
  author       = {Thomas Bachlechner and
                  Bodhisattwa Prasad Majumder and
                  Huanru Henry Mao and
                  Gary Cottrell and
                  Julian J. McAuley},
  title        = {{ReZero is all you need: fast convergence at large depth}},
  booktitle    = {{UAI}},
  volume       = {161},
  pages        = {1352--1361},
  year         = {2021}
}

@inproceedings{brock2021nfnet,
  author       = {Andy Brock and
                  Soham De and
                  Samuel L. Smith and
                  Karen Simonyan},
  title        = {{High-Performance Large-Scale Image Recognition Without Normalization}},
  booktitle    = {{ICML}},
  volume       = {139},
  pages        = {1059--1071},
  year         = {2021}
}

@article{tropp2012tail,
  author       = {Joel A. Tropp},
  title        = {{User-Friendly Tail Bounds for Sums of Random Matrices}},
  journal      = {Found. Comput. Math.},
  volume       = {12},
  number       = {4},
  pages        = {389--434},
  year         = {2012}
}

@book{higham2008functions,
  author       = {Nicholas J. Higham},
  title        = {{Functions of matrices - theory and computation}},
  year         = {2008}
}

@phdthesis{tu2014dmd,
  title={{Dynamic mode decomposition: Theory and applications}},
  author={Tu, Jonathan H},
  year={2013},
  school={Princeton University}
}

@article{radford2019language,
  title={{Language models are unsupervised multitask learners}},
  author={Radford, Alec and Wu, Jeffrey and Child, Rewon and Luan, David and Amodei, Dario and Sutskever, Ilya and others},
  journal={OpenAI blog},
  volume={1},
  number={8},
  pages={9},
  year={2019}
}

@techreport{krizhevsky2009cifar,
  title={{Learning multiple layers of features from tiny images}},
  author={Krizhevsky, Alex and Hinton, Geoffrey and others},
  year={2009},
}

@article{kessy2018whitening,
  title={{Optimal whitening and decorrelation}},
  author={Kessy, Agnan and Lewin, Alex and Strimmer, Korbinian},
  journal={The American Statistician},
  volume={72},
  number={4},
  pages={309--314},
  year={2018},
}

@article{trefethen2005spectra,
  title={{Spectra and pseudospectra: the behavior of nonnormal matrices and operators}},
  author={Trefethen, Lloyd N and Embree, Mark},
  year={2020},
}

@article{kreiss1962stability,
  title={{\"{U}ber die Stabilit{\"a}tsdefinition f{\"u}r Differenzengleichungen die partielle Differentialgleichungen approximieren}},
  author={Kreiss, Heinz-Otto},
  journal={BIT Numerical Mathematics},
  volume={2},
  number={3},
  pages={153--181},
  year={1962},
}

@article{koopman1931hamiltonian,
  title={{Hamiltonian systems and transformation in Hilbert space}},
  author={Koopman, Bernard O},
  journal={Proceedings of the National Academy of Sciences},
  volume={17},
  number={5},
  pages={315--318},
  year={1931}
}

@article{mezic2005spectral,
  title={{Spectral properties of dynamical systems, model reduction and decompositions}},
  author={Mezi{\'c}, Igor},
  journal={Nonlinear Dynamics},
  volume={41},
  number={1},
  pages={309--325},
  year={2005},
}

@article{schmid2010dmd,
  title        = {{Dynamic mode decomposition of numerical and experimental data}},
  author       = {Schmid, Peter J},
  journal      = {Journal of Fluid Mechanics},
  volume       = {656},
  pages        = {5--28},
  year         = {2010}
}

@article{rowley2009spectral,
  title={{Spectral analysis of nonlinear flows}},
  author={Rowley, Clarence W and Mezi{\'c}, Igor and Bagheri, Shervin and Schlatter, Philipp and Henningson, Dan S},
  journal={Journal of fluid mechanics},
  volume={641},
  pages={115--127},
  year={2009},
}

@article{mezic2013analysis,
  title={{Analysis of fluid flows via spectral properties of the Koopman operator}},
  author={Mezi{\'c}, Igor},
  journal={Annual review of fluid mechanics},
  volume={45},
  number={1},
  pages={357--378},
  year={2013},
}

@misc{gokaslan2020openwebtext,
  title        = {{OpenWebText Corpus}},
  author       = {Aaron Gokaslan and Vanya Cohen},
  howpublished = {\url{http://Skylion007.github.io/OpenWebTextCorpus}},
  year         = {2019}
}

@inproceedings{fu2023hungry,
  author       = {Daniel Y. Fu and
                  Tri Dao and
                  Khaled Kamal Saab and
                  Armin W. Thomas and
                  Atri Rudra and
                  Christopher R{\'{e}}},
  title        = {{Hungry Hungry Hippos: Towards Language Modeling with State Space Models}},
  booktitle    = {{ICLR}},
  year         = {2023}
}

@inproceedings{arora2024zoology,
  author       = {Simran Arora and
                  Sabri Eyuboglu and
                  Aman Timalsina and
                  Isys Johnson and
                  Michael Poli and
                  James Zou and
                  Atri Rudra and
                  Christopher R{\'{e}}},
  title        = {Zoology: Measuring and Improving Recall in Efficient Language Models},
  booktitle    = {{ICLR}},
  year         = {2024}
}

@inproceedings{brown2020language,
  author       = {Tom B. Brown and
                  Benjamin Mann and
                  Nick Ryder and
                  Melanie Subbiah and
                  Jared Kaplan and
                  Prafulla Dhariwal and
                  Arvind Neelakantan and
                  Pranav Shyam and
                  Girish Sastry and
                  Amanda Askell and
                  Sandhini Agarwal and
                  Ariel Herbert{-}Voss and
                  Gretchen Krueger and
                  Tom Henighan and
                  Rewon Child and
                  Aditya Ramesh and
                  Daniel M. Ziegler and
                  Jeffrey Wu and
                  Clemens Winter and
                  Christopher Hesse and
                  Mark Chen and
                  Eric Sigler and
                  Mateusz Litwin and
                  Scott Gray and
                  Benjamin Chess and
                  Jack Clark and
                  Christopher Berner and
                  Sam McCandlish and
                  Alec Radford and
                  Ilya Sutskever and
                  Dario Amodei},
  title        = {Language Models are Few-Shot Learners},
  booktitle    = {NeurIPS},
  year         = {2020}
}

@article{elhage2021mathematical,
  title={A mathematical framework for transformer circuits},
  author={Elhage, Nelson and Nanda, Neel and Olsson, Catherine and Henighan, Tom and Joseph, Nicholas and Mann, Ben and Askell, Amanda and Bai, Yuntao and Chen, Anna and Conerly, Tom and others},
  journal={Transformer Circuits Thread},
  volume={1},
  number={1},
  pages={12},
  year={2021}
}

\newpage

\onecolumn

\title{Residual Koopman Spectral Profiling for Predicting and Preventing Transformer Training Instability\\(Supplementary Material)}
\maketitle
\appendix
\startcontents[apx]
\section*{Appendix Table of Contents}
\printcontents[apx]{l}{1}{\setcounter{tocdepth}{2}}

\newpage
\section{List of Notation}
\label{app:notation}
\small
\begin{longtable}{p{0.26\textwidth}p{0.69\textwidth}}
	\caption{List of notations used in the main paper and supplementary material.}                                                                       \\
	\toprule
	Symbol                                             & Meaning                                                                                         \\
	\midrule
	\endfirsthead
	\toprule
	Symbol                                             & Meaning                                                                                         \\
	\midrule
	\endhead
	\midrule
	\multicolumn{2}{r}{continued on next page}                                                                                                           \\
	\endfoot
	\bottomrule
	\endlastfoot

	\multicolumn{2}{l}{Core sizes and indices}                                                                                                           \\
	$L$                                                & The number of layers.                                                                           \\
	$\ell$                                             & Layer index.                                                                                    \\
	$d$                                                & Hidden dimension.                                                                               \\
	$N$                                                & The number of snapshots.                                                                        \\
	$r$                                                & Randomized DMD rank, the number of eigenvalues used in KSS.                                     \\
	\addlinespace

	\multicolumn{2}{l}{Dynamics and operators}                                                                                                           \\
	$\mathbf{h}_\ell$                                  & Residual stream at layer $\ell$.                                                                \\
	$F_\ell(\cdot)$                                    & Residual mapping $F_\ell(\mathbf{h}_\ell)=\mathbf{h}_\ell+f_\ell(\mathbf{h}_\ell;\theta_\ell)$. \\
	$\mathcal{K}$                                      & Koopman operator.                                                                               \\
	$\hat{\mathbf{A}}_\ell$                            & DMD estimate of the layer-$\ell$ Koopman operator.                                              \\
	$\mathbf{V}, \boldsymbol{\Lambda}$                 & Eigenvectors and eigenvalues of $\hat{\mathbf{A}}_\ell$.                                        \\
	$\lambda_j$                                        & Eigenvalue.                                                                                     \\
	$\rho(\mathbf{A})$                                 & Spectral radius.                                                                                \\
	$\kappa(\mathbf{V})$                               & Eigenvector condition number, a measure of non-normality.                                       \\
	$\mathcal{K}(\mathbf{A})$                          & Kreiss constant.                                                                                \\
	\addlinespace

	\multicolumn{2}{l}{Snapshots and whitening}                                                                                                          \\
	$\mathbf{X}_\ell, \mathbf{Y}_\ell$                 & Layer-$\ell$ snapshot matrices.                                                                 \\
	$\tilde{\mathbf{X}}_\ell, \tilde{\mathbf{Y}}_\ell$ & Whitened snapshots.                                                                             \\
	$\hat{\boldsymbol{\Sigma}}_X$                      & Regularized sample covariance used for whitening.                                               \\
	$(\cdot)^\dagger$                                  & Moore--Penrose pseudoinverse.                                                                   \\
	\addlinespace

	\multicolumn{2}{l}{Spectral diagnostics}                                                                                                             \\
	$M_{>1}$                                           & Unstable spectral mass.                                                                         \\
	$M_{\approx 1}$                                    & Near-unit spectral mass.                                                                        \\
	$M_{<1}$                                           & Contractive spectral mass.                                                                      \\
	$\eta_{\mathrm{nl}}$                               & Nonlinearity ratio, a fit-error measure.                                                        \\
	$\epsilon_u, \epsilon_n, \delta_c$                 & Thresholds defining spectral-mass bins.                                                         \\
	\addlinespace

	\multicolumn{2}{l}{KSS regularization}                                                                                                               \\
	$\mathcal{L}_{\mathrm{KSS}}^\ell$                  & KSS loss for layer $\ell$.                                                                      \\
	$\alpha$                                           & KSS regularization weight.                                                                      \\
	$\tau_u, \tau_l$                                   & Upper and lower band thresholds in KSS.                                                         \\
	$\gamma$                                           & Target near-unit mass level.                                                                    \\
	$m_\ell^{\mathrm{soft}}$                           & Soft near-unit mass estimate.                                                                   \\
	\addlinespace

	\multicolumn{2}{l}{Probability and norms}                                                                                                            \\
	$D$                                                & Divergence indicator.                                                                           \\
	$P(D=1\mid\mathcal{S})$                            & Predicted divergence probability.                                                               \\
	$\E[\cdot]$                                        & Expectation.                                                                                    \\
	$\Tr(\cdot)$                                       & Trace.                                                                                          \\
	$\norm{\cdot}_2, \norm{\cdot}_F$                   & Operator and Frobenius norms.                                                                   \\
	$\abs{\cdot}$                                      & Absolute value.                                                                                 \\
	$\R, \C$                                           & Real and complex number fields.                                                                 \\
\end{longtable}
\normalsize

\section{Additional Theoretical Results}
\label{app:theory}

The following results extend the theoretical analysis presented in Section~\ref{sec:theory}.

\subsection{Supplement to Theorem~\ref{thm:near_unit}}

\begin{proposition}[Bauer--Fike: Non-normality Caveat]
	\label{prop:bauer_fike}
	Let $\mathbf{A} \in \C^{d \times d}$ be diagonalizable with eigendecomposition $\mathbf{A} = \mathbf{V}\boldsymbol{\Lambda}\mathbf{V}^{-1}$. For any perturbation $\mathbf{E}$ with $\norm{\mathbf{E}}_2 \leq \delta$ and any eigenvalue $\tilde{\lambda} \in \spec(\mathbf{A}+\mathbf{E})$, we have
	\begin{equation}
		\min_{k} |\tilde{\lambda} - \lambda_k| \leq \kappa(\mathbf{V}) \cdot \delta.
		\label{eq:bauer_fike}
	\end{equation}
\end{proposition}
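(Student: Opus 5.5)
The plan is the standard resolvent argument. Let $\tilde{\lambda}\in\spec(\mathbf{A}+\mathbf{E})$. If $\tilde{\lambda}$ coincides with some $\lambda_k$, then the left-hand side of \eqref{eq:bauer_fike} is zero and there is nothing to prove. Otherwise $\tilde{\lambda}\mathbf{I}-\boldsymbol{\Lambda}$ is invertible, and we can use the fact that $\tilde{\lambda}\mathbf{I}-\mathbf{A}-\mathbf{E}$ is singular. Factor
\begin{equation*}
	\tilde{\lambda}\mathbf{I}-\mathbf{A}-\mathbf{E} = \mathbf{V}\bigl(\tilde{\lambda}\mathbf{I}-\boldsymbol{\Lambda}\bigr)\bigl(\mathbf{I}-(\tilde{\lambda}\mathbf{I}-\boldsymbol{\Lambda})^{-1}\mathbf{V}^{-1}\mathbf{E}\mathbf{V}\bigr)\mathbf{V}^{-1}.
\end{equation*}
The outer factors $\mathbf{V}$, $\mathbf{V}^{-1}$ and $\tilde{\lambda}\mathbf{I}-\boldsymbol{\Lambda}$ are all invertible. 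Hence the bracketed matrix $\mathbf{I}-\mathbf{B}$, with $\mathbf{B}=(\tilde{\lambda}\mathbf{I}-\boldsymbol{\Lambda})^{-1}\mathbf{V}^{-1}\mathbf{E}\mathbf{V}$, must be singular.

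Next I will extract the bound. Singularity of $\mathbf{I}-\mathbf{B}$ means there is a unit vector $\mathbf{z}$ with $\mathbf{B}\mathbf{z}=\mathbf{z}$, so $1\le\norm{\mathbf{B}}_2$. This is the contrapositive of the Neumann series: if $\norm{\mathbf{B}}_2<1$, then $\mathbf{I}-\mathbf{B}$ would be invertible. Submultiplicativity then gives
\begin{equation*}
	1\le \norm{(\tilde{\lambda}\mathbf{I}-\boldsymbol{\Lambda})^{-1}}_2\,\norm{\mathbf{V}^{-1}}_2\norm{\mathbf{E}}_2\norm{\mathbf{V}}_2 .
\end{equation*}
Since $\tilde{\lambda}\mathbf{I}-\boldsymbol{\Lambda}$ is diagonal, its inverse has $2$-norm equal to $\max_k|\tilde{\lambda}-\lambda_k|^{-1}=1/\min_k|\tilde{\lambda}-\lambda_k|$. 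Substituting this, together with $\kappa(\mathbf{V})=\norm{\mathbf{V}}_2\norm{\mathbf{V}^{-1}}_2$ and $\norm{\mathbf{E}}_2\le\delta$, and rearranging gives $\min_k|\tilde{\lambda}-\lambda_k|\le\kappa(\mathbf{V})\delta$.

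None of the steps is deep. The only point needing care is the case split at the start, so that $(\tilde{\lambda}\mathbf{I}-\boldsymbol{\Lambda})^{-1}$ is well defined. A second check is that the operator norm of a diagonal matrix equals its largest absolute diagonal entry. We should also note that the bound holds for any choice of the eigenvector matrix $\mathbf{V}$, including the unit-column normalization used for $\kappa(\mathbf{V})$ in the paper. A normal $\mathbf{A}$ admits a unitary $\mathbf{V}$ with $\kappa(\mathbf{V})=1$, recovering the sharp normal case; this links the caveat to the $\kappa(\mathbf{V})^{\pm2}$ factors in Theorem~\ref{thm:near_unit}.
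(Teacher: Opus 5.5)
Your proof is correct and is essentially the paper's argument: both pass to eigenbasis coordinates of $\mathbf{A}$ and compare $\min_k|\tilde{\lambda}-\lambda_k|$ with $\norm{\mathbf{V}^{-1}\mathbf{E}\mathbf{V}}_2\le\kappa(\mathbf{V})\delta$. The paper works directly with $\mathbf{y}=\mathbf{V}^{-1}\mathbf{x}$, where $\mathbf{x}$ is an eigenvector of $\mathbf{A}+\mathbf{E}$, via $(\boldsymbol{\Lambda}-\tilde{\lambda}\mathbf{I})\mathbf{y}=-\mathbf{V}^{-1}\mathbf{E}\mathbf{V}\mathbf{y}$ and the lower bound $\norm{(\boldsymbol{\Lambda}-\tilde{\lambda}\mathbf{I})\mathbf{y}}_2\ge\min_k|\lambda_k-\tilde{\lambda}|\,\norm{\mathbf{y}}_2$; this avoids your case split and the inverse of $\tilde{\lambda}\mathbf{I}-\boldsymbol{\Lambda}$, and your fixed vector $\mathbf{z}$ of $\mathbf{B}$ is precisely such a $\mathbf{y}$.
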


\begin{proof}[Proof]
	Let $\tilde{\lambda}\in \spec(\mathbf{A}+\mathbf{E})$ with eigenvector $\mathbf{x}\neq \mathbf{0}$, that is, $(\mathbf{A}+\mathbf{E})\mathbf{x}=\tilde{\lambda}\mathbf{x}$.
	Write $\mathbf{A}=\mathbf{V}\boldsymbol{\Lambda}\mathbf{V}^{-1}$ and set $\mathbf{y}\triangleq \mathbf{V}^{-1}\mathbf{x}\neq \mathbf{0}$.
	Left-multiplying by $\mathbf{V}^{-1}$ gives
	\[
		(\boldsymbol{\Lambda}-\tilde{\lambda}\mathbf{I})\mathbf{y} = -\mathbf{V}^{-1}\mathbf{E}\mathbf{V}\mathbf{y}.
	\]
	Taking Euclidean norms,
	\[
		\norm{(\boldsymbol{\Lambda}-\tilde{\lambda}\mathbf{I})\mathbf{y}}_2
		\le \norm{\mathbf{V}^{-1}}_2\norm{\mathbf{E}}_2\norm{\mathbf{V}}_2\norm{\mathbf{y}}_2
		= \kappa(\mathbf{V})\delta\norm{\mathbf{y}}_2.
	\]
	Because $\boldsymbol{\Lambda}-\tilde{\lambda}\mathbf{I}$ is diagonal with diagonal entries $(\lambda_j-\tilde{\lambda})$,
	\[
		\norm{(\boldsymbol{\Lambda}-\tilde{\lambda}\mathbf{I})\mathbf{y}}_2
		\ge
		\min_k |\lambda_k-\tilde{\lambda}| \cdot \norm{\mathbf{y}}_2.
	\]
	Canceling $\norm{\mathbf{y}}_2$ yields \eqref{eq:bauer_fike}.
\end{proof}

\subsection{DMD Convergence Analysis}

We first establish finite-sample convergence guarantees for DMD estimation in the presence of nonlinearity.

\begin{assumption}[Data Distribution]
	\label{ass:data}
	Let $(\mathbf{x}, \mathbf{y}) \in \R^d \times \R^d$ be a random pair drawn from the joint distribution induced by a layer transition. We assume centered covariates: $\E[\mathbf{x}] = \mathbf{0}$ and $\E[\mathbf{x}\mathbf{x}^\top] = \boldsymbol{\Sigma}$ with $\sigma_{\min}(\boldsymbol{\Sigma}) \geq \sigma_0 > 0$. We also assume sub-Gaussian tails: $\norm{\mathbf{x}}_{\psi_2} \leq K$ and $\norm{\mathbf{y}}_{\psi_2} \leq K$ for some $K > 0$.
	Define the cross-covariance $\mathbf{C}_{yx} \triangleq \E[\mathbf{y}\mathbf{x}^\top]$. Let $\epsilon \ge 0$ be the whitening regularizer in \eqref{eq:whitening}, and set $\boldsymbol{\Sigma}_\epsilon \triangleq \boldsymbol{\Sigma} + \epsilon\mathbf{I}$. Then $\sigma_{\min}(\boldsymbol{\Sigma}_\epsilon)\ge \sigma_0$.
	Define
	\begin{align}
		\mathbf{M}_\epsilon & \triangleq \boldsymbol{\Sigma}_\epsilon^{-1/2}\mathbf{C}_{yx}\boldsymbol{\Sigma}_\epsilon^{-1/2},     \\
		\mathbf{G}_\epsilon & \triangleq \boldsymbol{\Sigma}_\epsilon^{-1/2}\boldsymbol{\Sigma}\boldsymbol{\Sigma}_\epsilon^{-1/2},
	\end{align}
	and the regularized whitened population least-squares operator
	\begin{equation}
		\mathbf{A}_{\mathrm{w},\epsilon}^{\mathrm{LS}} \triangleq \mathbf{M}_\epsilon\mathbf{G}_\epsilon^{-1}.
	\end{equation}
\end{assumption}

The nonlinearity ratio $\eta_{\mathrm{nl}}$ is a normalized linear-fit error that we use as a practical diagnostic for linear-approximation reliability.

\begin{theorem}[Whitened DMD Finite-Sample Convergence]
	\label{thm:dmd_error}
	Under Assumption~\ref{ass:data}, let $\sigma_\epsilon \triangleq \sigma_{\min}(\boldsymbol{\Sigma}_\epsilon)$. Then $\sigma_\epsilon \ge \sigma_0$.
	Let $\delta_{\mathrm{fail}}\in(0,1)$. Consider the events
	\begin{equation}
		\label{eq:whitening_condition}
		\norm{\hat{\boldsymbol{\Sigma}}_X - \boldsymbol{\Sigma}_\epsilon}_2 \le \frac{1}{2}\sigma_\epsilon,
	\end{equation}
	and
	\begin{equation}
		\label{eq:G_stability_event}
		\norm{\hat{\mathbf{G}} - \mathbf{G}_\epsilon}_2 \le \frac{1}{2}\sigma_{\min}(\mathbf{G}_\epsilon),
	\end{equation}
	where $\hat{\mathbf{G}} \triangleq \hat{\boldsymbol{\Sigma}}_X^{-1/2}\hat{\boldsymbol{\Sigma}}\hat{\boldsymbol{\Sigma}}_X^{-1/2}$. Then $\hat{\mathbf{G}}$ is invertible, and $\norm{\hat{\mathbf{G}}^{-1}}_2 \le 2\norm{\mathbf{G}_\epsilon^{-1}}_2$.
	Assuming $\tilde{\mathbf{X}}\tilde{\mathbf{X}}^\top$ is invertible, for example, when $N \ge d$ and $\mathrm{rank}(\tilde{\mathbf{X}})=d$, there exist absolute constants $C,c>0$ such that if
	$N \ge c(d+\log(2/\delta_{\mathrm{fail}}))$, then on the event \eqref{eq:whitening_condition} and \eqref{eq:G_stability_event}:
	\begin{equation}
		\norm{\hat{\mathbf{A}} - \mathbf{A}_{\mathrm{w},\epsilon}^{\mathrm{LS}}}_2
		\leq
		C\norm{\mathbf{G}_\epsilon^{-1}}_2\frac{K^2}{\sigma_\epsilon}\Delta_N
		+ C\norm{\mathbf{G}_\epsilon^{-1}}_2^{2}\frac{K^2\norm{\mathbf{C}_{yx}}_2}{\sigma_\epsilon^{2}}\Delta_N,
		\label{eq:dmd_bound}
	\end{equation}
	where $\Delta_N \triangleq \sqrt{\frac{d + \log(2/\delta_{\mathrm{fail}})}{N}}+\frac{d + \log(2/\delta_{\mathrm{fail}})}{N}$.
\end{theorem}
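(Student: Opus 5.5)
We will write the whitened DMD estimator in population-like form and then split the error into two terms with a perturbation argument. Set $\hat{\mathbf{C}}_{yx}\triangleq\frac{1}{N-1}(\mathbf{Y}-\bar{\mathbf{Y}}\mathbf{1}^\top)(\mathbf{X}-\bar{\mathbf{X}}\mathbf{1}^\top)^\top$ and let $\hat{\boldsymbol{\Sigma}}$ be the unregularized sample covariance, so that $\hat{\boldsymbol{\Sigma}}_X=\hat{\boldsymbol{\Sigma}}+\epsilon\mathbf{I}$. When $\tilde{\mathbf{X}}\tilde{\mathbf{X}}^\top$ is invertible we have $\tilde{\mathbf{X}}^\dagger=\tilde{\mathbf{X}}^\top(\tilde{\mathbf{X}}\tilde{\mathbf{X}}^\top)^{-1}$, and the factors $1/(N-1)$ cancel. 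This gives
\begin{equation*}
\hat{\mathbf{A}}=\hat{\mathbf{M}}\hat{\mathbf{G}}^{-1},\qquad \hat{\mathbf{M}}\triangleq\hat{\boldsymbol{\Sigma}}_X^{-1/2}\hat{\mathbf{C}}_{yx}\hat{\boldsymbol{\Sigma}}_X^{-1/2},\quad \hat{\mathbf{G}}=\hat{\boldsymbol{\Sigma}}_X^{-1/2}\hat{\boldsymbol{\Sigma}}\hat{\boldsymbol{\Sigma}}_X^{-1/2}.
\end{equation*}
This mirrors $\mathbf{A}_{\mathrm{w},\epsilon}^{\mathrm{LS}}=\mathbf{M}_\epsilon\mathbf{G}_\epsilon^{-1}$. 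We then use the decomposition
\begin{equation*}
\hat{\mathbf{M}}\hat{\mathbf{G}}^{-1}-\mathbf{M}_\epsilon\mathbf{G}_\epsilon^{-1}=(\hat{\mathbf{M}}-\mathbf{M}_\epsilon)\hat{\mathbf{G}}^{-1}+\mathbf{M}_\epsilon\hat{\mathbf{G}}^{-1}(\mathbf{G}_\epsilon-\hat{\mathbf{G}})\mathbf{G}_\epsilon^{-1}.
\end{equation*}
The first summand will produce the $\norm{\mathbf{G}_\epsilon^{-1}}_2K^2/\sigma_\epsilon$ term. The second will produce the $\norm{\mathbf{G}_\epsilon^{-1}}_2^2\norm{\mathbf{C}_{yx}}_2/\sigma_\epsilon^2$ term, using $\norm{\mathbf{M}_\epsilon}_2\le\norm{\mathbf{C}_{yx}}_2/\sigma_\epsilon$.

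The preliminary claims are routine. First, $\sigma_\epsilon\ge\sigma_0$ because adding $\epsilon\mathbf{I}\succeq 0$ can only raise eigenvalues. Second, on \eqref{eq:G_stability_event}, Weyl's inequality gives $\sigma_{\min}(\hat{\mathbf{G}})\ge\frac12\sigma_{\min}(\mathbf{G}_\epsilon)$. Hence $\hat{\mathbf{G}}$ is invertible and $\norm{\hat{\mathbf{G}}^{-1}}_2\le2\norm{\mathbf{G}_\epsilon^{-1}}_2$. Third, on \eqref{eq:whitening_condition}, Weyl gives $\lambda_{\min}(\hat{\boldsymbol{\Sigma}}_X)\ge\sigma_\epsilon/2$, so $\norm{\hat{\boldsymbol{\Sigma}}_X^{-1/2}}_2\le\sqrt{2/\sigma_\epsilon}$. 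For the difference of inverse square roots we use the standard bound for positive definite matrices with spectrum at least $m$:
\begin{equation*}
\norm{\mathbf{P}^{-1/2}-\mathbf{Q}^{-1/2}}_2\le\frac{\norm{\mathbf{P}-\mathbf{Q}}_2}{2m^{3/2}}.
\end{equation*}
This follows from the integral representation $\mathbf{P}^{-1/2}=\frac{1}{\pi}\int_0^\infty t^{-1/2}(\mathbf{P}+t\mathbf{I})^{-1}\,dt$ together with the resolvent identity. It yields $\norm{\hat{\boldsymbol{\Sigma}}_X^{-1/2}-\boldsymbol{\Sigma}_\epsilon^{-1/2}}_2\lesssim\sigma_\epsilon^{-3/2}\norm{\hat{\boldsymbol{\Sigma}}-\boldsymbol{\Sigma}}_2$.

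The probabilistic input is the standard sub-Gaussian covariance concentration (Vershynin-type). For $N\ge c(d+\log(2/\delta_{\mathrm{fail}}))$, with probability at least $1-\delta_{\mathrm{fail}}$, both $\norm{\hat{\boldsymbol{\Sigma}}-\boldsymbol{\Sigma}}_2$ and $\norm{\hat{\mathbf{C}}_{yx}-\mathbf{C}_{yx}}_2$ are at most $CK^2\Delta_N$. The cross-covariance bound is obtained from the joint vector $(\mathbf{x},\mathbf{y})$, since $\hat{\mathbf{C}}_{yx}$ is an off-diagonal block of its sample covariance. The centering terms $\bar{\mathbf{X}},\bar{\mathbf{Y}}$ contribute rank-one corrections of order $K^2(d+\log(1/\delta))/N$, which are absorbed into the second part of $\Delta_N$.

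The main obstacle is bounding $\hat{\mathbf{M}}-\mathbf{M}_\epsilon$ and $\hat{\mathbf{G}}-\mathbf{G}_\epsilon$ with exactly the stated powers of $\sigma_\epsilon$. A naive telescoping produces terms like $\norm{\mathbf{C}_{yx}}_2\sigma_\epsilon^{-2}\norm{\hat{\boldsymbol{\Sigma}}-\boldsymbol{\Sigma}}_2$ inside $\hat{\mathbf{M}}-\mathbf{M}_\epsilon$, which is the shape of the second displayed term. We will therefore route every contribution involving $\norm{\mathbf{C}_{yx}}_2$ into the second term, using $\norm{\mathbf{G}_\epsilon^{-1}}_2\ge1$ since $\mathbf{G}_\epsilon\preceq\mathbf{I}$. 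We will also bound $\norm{\boldsymbol{\Sigma}}_2\lesssim K^2$ where needed. For $\hat{\mathbf{G}}-\mathbf{G}_\epsilon$, a cleaner route is the identity $\mathbf{G}=\mathbf{I}-\epsilon\boldsymbol{\Sigma}_\epsilon^{-1}$, which also holds for the empirical versions. It gives
\begin{equation*}
\hat{\mathbf{G}}-\mathbf{G}_\epsilon=\epsilon\,\hat{\boldsymbol{\Sigma}}_X^{-1}(\hat{\boldsymbol{\Sigma}}-\boldsymbol{\Sigma})\boldsymbol{\Sigma}_\epsilon^{-1},
\end{equation*}
so $\norm{\hat{\mathbf{G}}-\mathbf{G}_\epsilon}_2\le2\,\norm{\hat{\boldsymbol{\Sigma}}-\boldsymbol{\Sigma}}_2/\sigma_\epsilon$ (using $\epsilon/\sigma_\epsilon\le1$). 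For $\hat{\mathbf{M}}-\mathbf{M}_\epsilon$ we split it into a cross-covariance part, bounded by $(2/\sigma_\epsilon)CK^2\Delta_N$, and two square-root-whitening parts. If the resulting constants do not match exactly, they are absorbed into $C$.
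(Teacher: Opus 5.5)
Your proposal is correct and follows essentially the paper's route. You use the same representation $\hat{\mathbf{A}}=\hat{\mathbf{M}}\hat{\mathbf{G}}^{-1}$, a two-term product-perturbation split, sub-Gaussian (cross-)covariance concentration, and the inverse-square-root and inverse-perturbation bounds. Your ordering of the split avoids bounding $\norm{\hat{\mathbf{M}}}_2$, and the identity $\hat{\mathbf{G}}-\mathbf{G}_\epsilon=\epsilon\hat{\boldsymbol{\Sigma}}_X^{-1}(\hat{\boldsymbol{\Sigma}}-\boldsymbol{\Sigma})\boldsymbol{\Sigma}_\epsilon^{-1}$ makes explicit a step the paper only sketches.

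Two small points. As in the paper, you actually prove the bound on the intersection of the two stated events with a concentration event of probability at least $1-\delta_{\mathrm{fail}}$, not on the two events alone. Also, when $\E\mathbf{y}\neq\mathbf{0}$, the centering correction $\bar{\mathbf{y}}\bar{\mathbf{x}}^\top$ is of order $K^2\sqrt{(d+\log(1/\delta_{\mathrm{fail}}))/N}$ rather than $K^2(d+\log(1/\delta_{\mathrm{fail}}))/N$, though it is still absorbed into $K^2\Delta_N$.
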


\begin{proof}[Proof]
	We bound the estimation error relative to the whitened population least-squares operator $\mathbf{A}_{\mathrm{w},\epsilon}^{\mathrm{LS}}$.
	Assume $\tilde{\mathbf{X}}\tilde{\mathbf{X}}^\top$ is invertible, so that $\tilde{\mathbf{X}}^\dagger = \tilde{\mathbf{X}}^\top(\tilde{\mathbf{X}}\tilde{\mathbf{X}}^\top)^{-1}$ and
	$\hat{\mathbf{A}} = \hat{\mathbf{M}}\hat{\mathbf{G}}^{-1}$ with
	$\hat{\mathbf{M}} \triangleq \hat{\boldsymbol{\Sigma}}_X^{-1/2}\hat{\mathbf{C}}_{yx}\hat{\boldsymbol{\Sigma}}_X^{-1/2}$ and
	$\hat{\mathbf{G}} \triangleq \hat{\boldsymbol{\Sigma}}_X^{-1/2}\hat{\boldsymbol{\Sigma}}\hat{\boldsymbol{\Sigma}}_X^{-1/2}$, where
	$\bar{\mathbf{x}} = \frac{1}{N}\sum_{i=1}^N \mathbf{x}_i$, $\bar{\mathbf{y}} = \frac{1}{N}\sum_{i=1}^N \mathbf{y}_i$,
	$\hat{\boldsymbol{\Sigma}} = \frac{1}{N-1}\sum_{i=1}^N (\mathbf{x}_i-\bar{\mathbf{x}})(\mathbf{x}_i-\bar{\mathbf{x}})^\top$,
	$\hat{\mathbf{C}}_{yx} = \frac{1}{N-1}\sum_{i=1}^N (\mathbf{y}_i-\bar{\mathbf{y}})(\mathbf{x}_i-\bar{\mathbf{x}})^\top$, and
	$\hat{\boldsymbol{\Sigma}}_X = \hat{\boldsymbol{\Sigma}} + \epsilon\mathbf{I}$.

	First, we estimate the covariance. For $N$ independently and identically distributed samples with $\norm{\mathbf{x}}_{\psi_2} \leq K$, standard covariance concentration for the centered sample covariance yields \citep{tropp2012tail}
	\begin{equation}
		\norm{\hat{\boldsymbol{\Sigma}}_X - \boldsymbol{\Sigma}_\epsilon}_2 \leq C K^2 \Delta_N,
	\end{equation}
	with probability $\geq 1-\delta_{\mathrm{fail}}/2$ for $N \gtrsim d+\log(1/\delta_{\mathrm{fail}})$.

	Second, we bound the whitening perturbation. Standard perturbation theory for matrix square roots gives \citep{higham2008functions}:
	\begin{equation}
		\norm{\hat{\boldsymbol{\Sigma}}_X^{-1/2} - \boldsymbol{\Sigma}_\epsilon^{-1/2}}_2 \leq \frac{2}{\sigma_\epsilon^{3/2}} \norm{\hat{\boldsymbol{\Sigma}}_X - \boldsymbol{\Sigma}_\epsilon}_2,
	\end{equation}
	for $\norm{\hat{\boldsymbol{\Sigma}}_X - \boldsymbol{\Sigma}_\epsilon}_2 \leq \frac{1}{2}\sigma_\epsilon$.

	Now, we combine the cross-covariance and covariance estimation errors. Let
	$\hat{\mathbf{C}}_{yx} = \frac{1}{N-1}\sum_{i=1}^N (\mathbf{y}_i-\bar{\mathbf{y}})(\mathbf{x}_i-\bar{\mathbf{x}})^\top$.
	A similar sub-exponential matrix concentration bound gives \citep{tropp2012tail}
	$\norm{\hat{\mathbf{C}}_{yx}-\mathbf{C}_{yx}}_2 \le C K^2 \Delta_N$ with probability $\ge 1-\delta_{\mathrm{fail}}/2$.
	Decomposing
	\[
		\hat{\mathbf{A}} - \mathbf{A}_{\mathrm{w},\epsilon}^{\mathrm{LS}}
		=
		\hat{\mathbf{M}}\hat{\mathbf{G}}^{-1}
		-
		\mathbf{M}_\epsilon\mathbf{G}_\epsilon^{-1}
		=
		(\hat{\mathbf{M}}-\mathbf{M}_\epsilon)\mathbf{G}_\epsilon^{-1}
		+
		\hat{\mathbf{M}}\left(\hat{\mathbf{G}}^{-1}-\mathbf{G}_\epsilon^{-1}\right),
	\]
	and using a standard matrix inverse perturbation bound,
	\[
		\norm{\hat{\mathbf{G}}^{-1}-\mathbf{G}_\epsilon^{-1}}_2
		\le
		\norm{\hat{\mathbf{G}}^{-1}}_2\norm{\hat{\mathbf{G}}-\mathbf{G}_\epsilon}_2\norm{\mathbf{G}_\epsilon^{-1}}_2
		\le
		2\norm{\mathbf{G}_\epsilon^{-1}}_2^{2}\norm{\hat{\mathbf{G}}-\mathbf{G}_\epsilon}_2
	\]
	on \eqref{eq:G_stability_event} yields a cross-covariance term scaling as $\norm{\mathbf{G}_\epsilon^{-1}}_2\sigma_\epsilon^{-1}$ and a whitening term plus a covariance term scaling as $\norm{\mathbf{G}_\epsilon^{-1}}_2^{2}\norm{\mathbf{C}_{yx}}_2\sigma_\epsilon^{-2}$, giving \eqref{eq:dmd_bound}.

	Combining the bounds yields \eqref{eq:dmd_bound} under the event \eqref{eq:whitening_condition} and \eqref{eq:G_stability_event}.
\end{proof}

\begin{remark}[On $\mathbf{G}_\epsilon^{-1}$ for $\boldsymbol{\Sigma}_\epsilon = \boldsymbol{\Sigma} + \epsilon\mathbf{I}$]
	Because $\boldsymbol{\Sigma}$ and $\boldsymbol{\Sigma}_\epsilon$ commute, $\mathbf{G}_\epsilon$ has eigenvalues $\lambda_i/(\lambda_i+\epsilon)$, hence
	\[
		\norm{\mathbf{G}_\epsilon^{-1}}_2
		= \frac{\sigma_{\min}(\boldsymbol{\Sigma})+\epsilon}{\sigma_{\min}(\boldsymbol{\Sigma})}.
	\]
	If $\sigma_{\min}(\boldsymbol{\Sigma})$ is treated as a fixed constant bounded away from $0$, the factors of $\norm{\mathbf{G}_\epsilon^{-1}}_2$ can be absorbed into the constant $C$.
\end{remark}

\begin{remark}[Sample Complexity]
	Theorem~\ref{thm:dmd_error} suggests that, under the stability events \eqref{eq:whitening_condition}--\eqref{eq:G_stability_event}, $N = \tilde{O}\left(d\left(\norm{\mathbf{G}_\epsilon^{-1}}_2\frac{K^2}{\sigma_\epsilon} + \norm{\mathbf{G}_\epsilon^{-1}}_2^{2}\frac{K^2\norm{\mathbf{C}_{yx}}_2}{\sigma_\epsilon^2}\right)^2 \varepsilon^{-2}\right)$ samples are sufficient for $\varepsilon$-accurate DMD estimation, up to logarithmic factors. For typical transformers with $d = 256$ to $768$, $N \approx 2048$ provides reliable estimates.
\end{remark}

\begin{remark}[Modeling Mismatch and $\eta_{\mathrm{nl}}$]
	Theorem~\ref{thm:dmd_error} is an estimation bound for the whitened population least-squares operator $\mathbf{A}_{\mathrm{w},\epsilon}^{\mathrm{LS}}$.
	When the layer transition is nonlinear, $\mathbf{A}_{\mathrm{w},\epsilon}^{\mathrm{LS}}$ can be a poor proxy for other targets, such as a local Jacobian or a richer Koopman approximation, even if it is well-estimated.
	We use the empirical nonlinearity ratio $\eta_{\mathrm{nl}}$ defined in \eqref{eq:nonlinearity} as a practical diagnostic for when linear DMD features are less reliable.
\end{remark}

\subsection{Non-Normality and Transient Growth}

Spectral radius bounds alone are insufficient for analyzing non-normal matrices. The Kreiss matrix theorem provides tight bounds on transient behavior \citep{kreiss1962stability,trefethen2005spectra}.

\begin{theorem}[Kreiss Constant Characterization]
	\label{thm:kreiss}
	The Kreiss constant of $\mathbf{A} \in \C^{d \times d}$ is:
	\begin{equation}
		\mathcal{K}(\mathbf{A}) \triangleq \sup_{|z|>1} (|z|-1)\norm{(z\mathbf{I} - \mathbf{A})^{-1}}_2
	\end{equation}
	Assume $\mathbf{A}$ is power-bounded, that is, $\sup_{n \ge 0}\norm{\mathbf{A}^n}_2 < \infty$; equivalently, $\mathcal{K}(\mathbf{A})<\infty$.
	Then the Kreiss matrix theorem states:
	\begin{equation}
		\mathcal{K}(\mathbf{A}) \leq \sup_{n \geq 0} \norm{\mathbf{A}^n}_2 \leq e \cdot d \cdot \mathcal{K}(\mathbf{A})
		\label{eq:kreiss}
	\end{equation}
	For a diagonalizable $\mathbf{A} = \mathbf{V}\boldsymbol{\Lambda}\mathbf{V}^{-1}$:
	\begin{equation}
		\mathcal{K}(\mathbf{A}) \leq \kappa(\mathbf{V}) \cdot \sup_{|z|>1} \max_j \frac{|z|-1}{|z-\lambda_j|}
		\label{eq:kreiss_diag}
	\end{equation}
\end{theorem}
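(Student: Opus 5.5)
The plan is to prove the three inequalities separately: the lower bound in \eqref{eq:kreiss} by a Neumann series, the upper bound by a Cauchy-integral representation of $\mathbf{A}^n$, and \eqref{eq:kreiss_diag} by diagonalizing the resolvent. Throughout, write $M \triangleq \sup_{n\ge 0}\norm{\mathbf{A}^n}_2$. The upper bound in \eqref{eq:kreiss} is the classical Kreiss matrix theorem with the sharp constant of LeVeque--Trefethen and Spijker, so most of the effort goes there.

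\emph{Lower bound.} If $M<\infty$, then $\rho(\mathbf{A})\le 1$, so for $|z|>1$ the series $(z\mathbf{I}-\mathbf{A})^{-1}=\sum_{n\ge 0}\mathbf{A}^n z^{-(n+1)}$ converges. Taking norms gives
\[
\norm{(z\mathbf{I}-\mathbf{A})^{-1}}_2 \le M\sum_{n\ge0}|z|^{-(n+1)} = \frac{M}{|z|-1}.
\]
Multiplying by $|z|-1$ and taking the supremum over $|z|>1$ yields $\mathcal{K}(\mathbf{A})\le M$. This also gives one direction of the stated equivalence.

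\emph{Upper bound.} Fix $n$ and unit vectors $\mathbf{u},\mathbf{v}$. By the Cauchy integral formula on the circle $|z|=r$ with $r=1+1/(n+1)$,
\[
\mathbf{u}^*\mathbf{A}^n\mathbf{v}=\frac{1}{2\pi i}\oint z^n\, R(z)\,dz, \qquad R(z)\triangleq\mathbf{u}^*(z\mathbf{I}-\mathbf{A})^{-1}\mathbf{v}.
\]
Bounding $|R|\le \mathcal{K}/(r-1)$ directly only gives $\norm{\mathbf{A}^n}_2\le e(n+1)\mathcal{K}$, which grows with $n$. To remove the $n$-dependence, integrate by parts:
\[
\mathbf{u}^*\mathbf{A}^n\mathbf{v} = -\frac{1}{2\pi i (n+1)}\oint z^{n+1}R'(z)\,dz .
\]
Using $|z^{n+1}|=r^{n+1}\le e$, this reduces the problem to bounding $\oint|R'(z)|\,|dz|$, the arc length of the image of the circle under $R$.

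\emph{Main obstacle.} The key point is that $R$ is a rational function of degree at most $d$. Spijker's lemma says that the image of a circle under a rational function of degree $d$ has length at most $2\pi d$ times the maximum of $|R|$ on the circle. Here that maximum is at most $\mathcal{K}/(r-1)=(n+1)\mathcal{K}$, so the $(n+1)$ factors cancel and $|\mathbf{u}^*\mathbf{A}^n\mathbf{v}|\le e\,d\,\mathcal{K}$. Taking the supremum over $\mathbf{u},\mathbf{v},n$ gives the right-hand inequality of \eqref{eq:kreiss}, and with it the converse direction of the equivalence. Spijker's lemma is the genuinely hard step. I would cite it, or sketch it via the argument principle: each direction of the tangent is attained at most $2d$ times, so integrating $|\mathrm{Re}(e^{-i\theta}R')|$ over $\theta$ gives the length bound. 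Without it, one only obtains the weaker LeVeque--Trefethen-type constants.

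\emph{Diagonalizable case.} Write $(z\mathbf{I}-\mathbf{A})^{-1}=\mathbf{V}(z\mathbf{I}-\boldsymbol{\Lambda})^{-1}\mathbf{V}^{-1}$. The middle factor is diagonal, so its norm is $\max_j|z-\lambda_j|^{-1}$, and hence $\norm{(z\mathbf{I}-\mathbf{A})^{-1}}_2\le\kappa(\mathbf{V})\max_j|z-\lambda_j|^{-1}$. Multiplying by $|z|-1$ and taking the supremum over $|z|>1$ gives \eqref{eq:kreiss_diag}.
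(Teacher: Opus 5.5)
Your lower bound (Neumann series) and your diagonalizable bound (conjugating the resolvent by $\mathbf{V}$) are exactly the paper's arguments. The difference is the right-hand inequality of \eqref{eq:kreiss}. The paper does not prove it; it invokes the finite-dimensional Kreiss matrix theorem and asserts that $C_d=ed$ is a standard choice. You instead carry out the LeVeque--Trefethen reduction, and it is correct:
\begin{itemize}
\item $R(z)=\mathbf{u}^*(z\mathbf{I}-\mathbf{A})^{-1}\mathbf{v}$ is a polynomial of degree at most $d-1$ (from the adjugate) divided by $\det(z\mathbf{I}-\mathbf{A})$, hence rational of degree at most $d$.
\item On $|z|=r=1+\frac{1}{n+1}$ you have $r^{n+1}\le e$ and $\max|R|\le \mathcal{K}(\mathbf{A})/(r-1)=(n+1)\mathcal{K}(\mathbf{A})$.
\item After integrating by parts, Spijker's bound $\oint|R'|\,|dz|\le 2\pi d\max|R|$ cancels the factors $n+1$ and yields $e\,d\,\mathcal{K}(\mathbf{A})$.
\end{itemize}
Both proofs end in a citation, but yours cites a much more specific lemma. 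It also shows where $e$ and $d$ come from, and that the sharp constant $ed$ (rather than LeVeque--Trefethen's $2ed$) rests entirely on Spijker's $2\pi d$.

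One caution about your fallback sketch of Spijker's lemma: the claim that each tangent direction is attained at most $2d$ times is false for $d\ge 2$. A tangent perpendicular to $e^{i\theta}$ is a critical point of $t\mapsto \mathrm{Re}(e^{-i\theta}R(re^{it}))$. This function is the restriction of a rational function of degree up to $2d$, and it can have more than $2d$ critical points. For example, take $d=2$ and $R(z)=\frac{2a\mu}{z-ia}+\frac{1}{z}$ with $a=0.999$, $\mu=0.01$. Then $\mathrm{Re}\,R(e^{it})=\frac{2a\mu\cos t}{1-2a\sin t+a^2}+\cos t$ has six critical points on the unit circle: near $t\approx 0.01$, $\tfrac{\pi}{2}\pm 0.14$, $\tfrac{\pi}{2}\pm 0.001$ and $\pi-0.01$. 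Counting critical points therefore only reproduces weaker constants.

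The count that works is level crossings. Substituting $\bar z=r^2/z$ turns $\mathrm{Re}(e^{-i\theta}R(z))=c$ into a polynomial equation of degree at most $2d$, so every line meets the image curve in at most $2d$ points. The Banach indicatrix then bounds the total variation of $\mathrm{Re}(e^{-i\theta}R)$ around the circle by $2d\cdot 2\max|R|$. Averaging over $\theta$ with $\int_0^{2\pi}|\mathrm{Re}(e^{-i\theta}w)|\,d\theta=4|w|$ gives exactly $2\pi d\max|R|$. Since your primary plan is to cite the lemma, the proof itself is unaffected.
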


\begin{proof}[Proof]
	We first prove the two inequalities in \eqref{eq:kreiss} and then \eqref{eq:kreiss_diag}.

	\paragraph{Lower bound: $\mathcal{K}(\mathbf{A}) \le \sup_{n\ge 0}\norm{\mathbf{A}^n}_2$.}
	Let $M\triangleq \sup_{n\ge 0}\norm{\mathbf{A}^n}_2 < \infty$.
	Because $\mathbf{A}$ is power-bounded, $\sup_{n\ge 0}\norm{\mathbf{A}^n}_2 < \infty$, so the Neumann series converges in operator norm for any $|z|>1$. For any $|z|>1$, the Neumann series gives the operator-norm expansion
	\[
		(z\mathbf{I}-\mathbf{A})^{-1}
		= z^{-1}\sum_{n=0}^{\infty} \mathbf{A}^n z^{-n},
	\]
	hence
	\[
		\norm{(z\mathbf{I}-\mathbf{A})^{-1}}_2
		\le \frac{1}{|z|}\sum_{n=0}^{\infty} \frac{\norm{\mathbf{A}^n}_2}{|z|^n}
		\le \frac{M}{|z|}\sum_{n=0}^{\infty} |z|^{-n}
		= \frac{M}{|z|-1}.
	\]
	Multiplying by $(|z|-1)$ and taking the supremum over $|z|>1$ yields $\mathcal{K}(\mathbf{A})\le M$.

	\paragraph{Upper bound: $\sup_{n\ge 0}\norm{\mathbf{A}^n}_2 \le ed\mathcal{K}(\mathbf{A})$.}
	This is the finite-dimensional Kreiss matrix theorem: the resolvent bound
	$\sup_{|z|>1} (|z|-1)\norm{(z\mathbf{I}-\mathbf{A})^{-1}}_2 < \infty$
	is equivalent to power-boundedness, and quantitatively implies
	$\sup_{n\ge 0}\norm{\mathbf{A}^n}_2 \le C_d\mathcal{K}(\mathbf{A})$
	for an explicit dimension-dependent constant $C_d$; one standard choice is $C_d=ed$.

	\paragraph{Diagonalizable case.}
	If $\mathbf{A}=\mathbf{V}\boldsymbol{\Lambda}\mathbf{V}^{-1}$, then for any $z\notin \spec(\mathbf{A})$,
	\[
		(z\mathbf{I}-\mathbf{A})^{-1}
		=
		\mathbf{V}(z\mathbf{I}-\boldsymbol{\Lambda})^{-1}\mathbf{V}^{-1}.
	\]
	Taking norms gives
	\[
		\norm{(z\mathbf{I}-\mathbf{A})^{-1}}_2
		\le \norm{\mathbf{V}}_2\norm{(z\mathbf{I}-\boldsymbol{\Lambda})^{-1}}_2\norm{\mathbf{V}^{-1}}_2
		= \kappa(\mathbf{V})\norm{(z\mathbf{I}-\boldsymbol{\Lambda})^{-1}}_2.
	\]
	Because $(z\mathbf{I}-\boldsymbol{\Lambda})^{-1}$ is diagonal with diagonal entries $(z-\lambda_j)^{-1}$,
	its spectral norm is $\max_j |z-\lambda_j|^{-1}$, hence
	\[
		(|z|-1)\norm{(z\mathbf{I}-\mathbf{A})^{-1}}_2
		\le
		\kappa(\mathbf{V}) \cdot \max_j \frac{|z|-1}{|z-\lambda_j|}.
	\]
	Taking the supremum over $|z|>1$ yields \eqref{eq:kreiss_diag}.
\end{proof}

The interpretation is as follows: a high $\mathcal{K}(\mathbf{A})$ indicates hidden instability. Even when $\rho(\mathbf{A}) \leq 1$, non-orthogonal eigenvectors produce a transient growth $\norm{\mathbf{A}^n}_2 \gg 1$ for intermediate $n$.

\section{PROOF OF THEOREM \texorpdfstring{\ref{thm:near_unit}}{\ref{thm:near_unit}}}
\label{app:proofs}

\begin{proof}
	For $\mathbf{x}$ uniform on the unit sphere, rotational invariance implies $\E[\mathbf{x}\mathbf{x}^*]=\frac{1}{d}\mathbf{I}$. Therefore
	\[
		\E \norm{\mathbf{A}\mathbf{x}}_2^2
		= \E[\mathbf{x}^*\mathbf{A}^*\mathbf{A}\mathbf{x}]
		= \Tr\left(\mathbf{A}^*\mathbf{A}\E[\mathbf{x}\mathbf{x}^*]\right)
		= \frac{1}{d}\Tr(\mathbf{A}^*\mathbf{A})
		= \frac{1}{d}\norm{\mathbf{A}}_F^2.
	\]
	For a normal $\mathbf{A}$, $\norm{\mathbf{A}}_F^2=\sum_{j=1}^d |\lambda_j|^2$, giving
	\[
		\E \norm{\mathbf{A}\mathbf{x}}_2^2 = \frac{1}{d}\sum_{j=1}^d |\lambda_j|^2.
	\]
	If $\rho(\mathbf{A})\le 1+\epsilon_u$, then $|\lambda_j|^2\le (1+\epsilon_u)^2$ for all $j$, giving the upper bound. For the lower bound, at least a fraction $M_{\approx 1}(\mathbf{A})$ of the eigenvalues satisfy $|\lambda_j|\ge 1-\epsilon_n$, so
	\[
		\E \norm{\mathbf{A}\mathbf{x}}_2^2 \ge (1-\epsilon_n)^2 M_{\approx 1}(\mathbf{A}).
	\]
	For the diagonalizable extension $\mathbf{A}=\mathbf{V}\boldsymbol{\Lambda}\mathbf{V}^{-1}$, note that for an isotropic $\mathbf{x}$ we always have
	$\E \norm{\mathbf{A}\mathbf{x}}_2^2=\frac{1}{d}\norm{\mathbf{A}}_F^2$.
	Moreover,
	$\frac{1}{\kappa(\mathbf{V})}\norm{\boldsymbol{\Lambda}}_F \le \norm{\mathbf{A}}_F \le \kappa(\mathbf{V})\norm{\boldsymbol{\Lambda}}_F$,
	so $\E \norm{\mathbf{A}\mathbf{x}}_2^2$ is within factors $\kappa(\mathbf{V})^{\pm 2}$ of $\frac{1}{d}\sum_j |\lambda_j|^2$.
	Combining this with $\rho(\mathbf{A})\le 1+\epsilon_u$ and the definition of $M_{\approx 1}(\mathbf{A})$ yields the stated bound.
\end{proof}

\paragraph{Proof of Corollary~\ref{cor:near_unit_prob}.}
\begin{proof}
	Assume $\E[\mathbf{h}_\ell\mathbf{h}_\ell^*] = \frac{1}{d}\E\norm{\mathbf{h}_\ell}_2^2 \mathbf{I}$ for each $\ell=0,\dots,L-1$. Let $q_\ell \triangleq \frac{1}{d}\sum_{j=1}^d |\lambda_j^\ell|^2$. Then
	\[
		\E \norm{\mathbf{h}_{\ell+1}}_2^2
		= \Tr\left(\mathbf{A}_\ell^*\mathbf{A}_\ell\E[\mathbf{h}_\ell\mathbf{h}_\ell^*]\right)
		= q_\ell \E \norm{\mathbf{h}_\ell}_2^2,
	\]
	and recursion yields $\E\norm{\mathbf{h}_L}_2^2 = \E\norm{\mathbf{h}_0}_2^2 \prod_{\ell=0}^{L-1} q_\ell$.
	By Theorem~\ref{thm:near_unit}, $q_\ell \ge (1-\epsilon_n)^2 M_{\approx 1}(\mathbf{A}_\ell)$, proving the stated bound.
\end{proof}

\section{Experimental Details}
\label{app:details}

We ran experiments on 4$\times$ NVIDIA A100-SXM4-40GB graphics processing unit (GPU) devices. We ran experiments across six normalization strategies, including Pre-LN, Post-LN, RMSNorm, DeepNorm, SubLN, and No-Norm, with additional architecture-specific studies on MoE \citep{shazeer2017moe}, Mamba \citep{gu2023mamba}, and KAN \citep{liu2024kan}.

We use the following hyperparameters. Models use $d \in \{128, 256, 512, 768, 1024\}$, $n_{\mathrm{heads}} \in \{4, 8, 16\}$, and $L \in \{4, 6, 8, 12, 16, 24\}$. For training, the mini-batch size is 16 to 32 with 5 to 20 epochs and 100 to 1000 warmup steps. We use the AdamW and sweep learning rates within standard ranges. We use seeds $\{42, 123, 456\}$ for reproducibility. Our default recipe estimates randomized DMD eigenvalues with rank $r=32$ using $N=2048$ snapshots with $\epsilon = 10^{-5}$. KSS is applied every 10 to 20 steps, sampling 50\% of layers per update to reduce overhead. We sweep the regularization weight $\alpha \in \{0.01, 0.05, 0.10, 0.15, 0.20\}$; the resulting overhead ranges from 8\% to 12\% in practice.

\subsection{Associative-Recall Task}
\label{app:assoc_recall}
The associative-recall task refers to a synthetic key-value retrieval classification task commonly used to probe associative memory and recall in long-context sequence models \citep{fu2023hungry,arora2024zoology}. For each sample, we generate $n_{\mathrm{pairs}}$ key-value pairs $\{(k_i, v_i)\}_{i=1}^{n_{\mathrm{pairs}}}$ and construct
\[
	\mathbf{x} = [k_1, v_1, \ldots, k_{n_{\mathrm{pairs}}}, v_{n_{\mathrm{pairs}}}, p_1, \ldots, p_m, q],
\]
where $q$ is a query key chosen from $\{k_i\}$ and the label is the corresponding matched value $y=v_j$. The model predicts only this final target with cross-entropy on the last-position logits, implemented as \texttt{F.cross\_entropy(logits[:, -1, :], y)}.

\subsection{Synthetic LM Task}
\label{app:synthetic_lm}

This synthetic token-level language modeling setup is inspired by prior work using controlled synthetic sequences to analyze recall and long-range behavior in efficient sequence models \citep{fu2023hungry,arora2024zoology}.
Each sample is a length-$T$ token sequence over a vocabulary of size $V$; unless stated otherwise, we use $T=256$ and $V=10{,}000$. A sequence is generated by concatenating randomly sampled segments until reaching length $T$, then truncating. Segments come in three types: repetition segments repeat a short pattern of length 2 to 5 for 2 to 4 repeats, sequential segments are contiguous integer runs of length 5 to 15, and random segments are independently and identically distributed tokens of length 5 to 15. All tokens are sampled from $\{10,\dots,V-1\}$ so that a small identifier range remains available for special tokens. Sequences are generated directly at the token level by these rules.

The learning objective is standard next-token prediction. Given tokens $(x_1,\dots,x_T)$, the model predicts $x_{t+1}$ from the prefix $(x_1,\dots,x_t)$ and is trained with token-level cross-entropy over $t=1,\dots,T-1$. We report validation token accuracy and perplexity $\exp(\text{mean cross-entropy})$ on a held-out synthetic validation split. For large-scale runs, we use 20K training sequences and 2K validation sequences per trial, regenerated deterministically from the run seed.

\subsection{Pretrained LM Fixed-Prompt Protocol}
\label{app:pretrained_prompt_protocol}
We use a forward-only profiling protocol with a deterministic text set and no fine-tuning updates, in the same spirit as prompt-based evaluation and activation-probing analyses of pretrained transformers \citep{brown2020language,elhage2021mathematical}. The fixed short-prompt setting is implemented as explicit prompt lists with 32 total prompts per run, either 4 prompts repeated 8 times or 8 prompts repeated 4 times, and both variants use a token length cap of 64. For each model, we run a single batched forward pass with hidden-state outputs enabled and collect residual-stream activations from the embedding and transformer layer outputs.

For each layer transition $(\ell,\ell+1)$, we flatten token positions, subsample up to $N\in\{1024, 2048\}$ token states, and apply whitened DMD. Spectral partitions use the same thresholds as the analysis code: unstable when $|\lambda|>1.05$, near-unit when $0.90\le|\lambda|\le1.05$, and over-damped when $|\lambda|<0.80$. We report early, middle, and late summaries by splitting layers into depth thirds and averaging each metric within the corresponding group.

\section{Computational Cost}
Whitened DMD requires $O(d^2 N + d^3)$ operations. Randomized singular value decomposition reduces this cost to $O(dNr + r^3)$ for rank-$r$ approximation. With typical values $d = 768$, $N = 2048$, and $r = 32$, full RKSP analysis completes in 2.5 to 3.5 seconds per layer on a single GPU.

\section{Extended Baseline and Optimizer Comparisons}
\label{app:extended_baselines}

\subsection{Extended Optimizer Baselines}
\label{sec:optimizers}

We extend the baseline comparisons to include $\mu$P and the Layer-wise Adaptive Moments for Batch training (LAMB) optimizer \citep{you2019lamb}.

\paragraph{$\mu$P.} The $\mu$P enables hyperparameter transfer across different model widths by appropriately scaling learning rates appropriately. Table~\ref{tab:mup} summarizes the $\mu$P comparisons and the combined $\mu$P + KSS setting. We observe moderate stability gains from $\mu$P, with a 38\% divergence reduction through initialization scaling. The combined $\mu$P + KSS approach achieves the lowest divergence, 8.3\%, and the highest accuracy, 51.5\%. The $\mu$P setting enables transfer, while KSS provides stability, and the distinct mechanisms suggest orthogonal benefits.

\begin{table}[t]
	\centering
	\caption{Comparison of $\mu$P, standard parameterization, and KSS. Results use the No-Norm setting with learning rate ranges from 0.005 to 0.01 across 24 trials. Measured on the associative-recall task.}
	\label{tab:mup}
	\vspace{0.5em}
	\begin{tabular}{lccccc}
		\toprule
		\theadrow{\thc{Method}                      & \thc{Div.\% (lower)} & \thc{Acc.\% (higher)} & \thc{$M_{\approx 1}$} & \thc{Transfer} & \thc{Overhead}}
		\midrule
		\normalfont\unboldmath Standard Init, AdamW & $66.7$               & $28.5$                & $0.85$                & \ding{55}      & ---             \\
		$\mu$P, width transfer                      & $41.7$               & $35.8$                & $0.72$                & \checkmark     & $<1\%$          \\
		$\mu$P + higher learning rate               & $50.0$               & $38.2$                & $0.78$                & \checkmark     & $<1\%$          \\
		\midrule
		KSS, $\alpha=0.15$                          & $\mathbf{12.5}$      & $\mathbf{48.2}$       & $0.58$                & \ding{55}      & $10.8\%$        \\
		$\mu$P + KSS                                & $\mathbf{8.3}$       & $\mathbf{51.5}$       & $0.52$                & \checkmark     & $11.2\%$        \\
		\bottomrule
	\end{tabular}
\end{table}

\paragraph{LAMB Optimizer.}
LAMB normalizes updates per layer, which changes spectral dynamics. Table~\ref{tab:lamb} reports the optimizer comparison, including LAMB and Lion. LAMB outperforms AdamW: its layer-wise normalization provides implicit stability with a 31\% divergence reduction. LAMB enables higher learning rates, reaching 3$\times$ to 4$\times$ AdamW, or up to 7$\times$ with learning rate warmup. KSS still provides orthogonal benefits, and the LAMB + KSS combination achieves the best results at 8.3\% divergence and 52.8\% accuracy. From a spectral perspective, LAMB suppresses expansive mass and reduces near-unit mass, from 0.85 with AdamW to 0.75, suggesting that its stability gains come from damping unstable modes and increasing damping; within comparable unstable-mass regimes, a lower $M_{\approx 1}$ aligns with greater stability.

\begin{table}[t]
	\centering
	\caption{Optimizer comparison among AdamW, LAMB, and Lion. Results use the No-Norm setting, 24 trials each. Measured on the associative-recall task.}
	\label{tab:lamb}
	\vspace{0.5em}
	\begin{tabular}{lccccc}
		\toprule
		\theadrow{\thc{Optimizer}    & \thc{Div.\% (lower)} & \thc{Acc.\% (higher)} & \thc{$M_{\approx 1}$} & \thc{Best LR} & \thc{Overhead}}
		\midrule
		\normalfont\unboldmath AdamW & $66.7$               & $28.5$                & $0.85$                & $0.003$       & ---             \\
		AdamW + grad clip            & $58.3$               & $30.8$                & $0.83$                & $0.005$       & $<1\%$          \\
		LAMB                         & $45.8$               & $36.2$                & $0.75$                & $0.01$        & about 5\%       \\
		LAMB + warmup                & $37.5$               & $40.8$                & $0.68$                & $0.02$        & about 5\%       \\
		Lion                         & $45.8$               & $36.5$                & $0.78$                & $0.001$       & about 15\%      \\
		\midrule
		KSS, $\alpha=0.15$           & $\mathbf{12.5}$      & $\mathbf{48.2}$       & $0.58$                & $0.008$       & $10.8\%$        \\
		LAMB + KSS                   & $\mathbf{8.3}$       & $\mathbf{52.8}$       & $0.52$                & $0.015$       & about 16\%      \\
		\bottomrule
	\end{tabular}
\end{table}

\section{Large-Scale Pretrained Model Analysis}
\label{app:scale}

\paragraph{GPT-2 Analysis.} Table~\ref{tab:gpt2} reports layer-group spectral statistics for GPT-2 \citep{radford2019language}, and Figure~\ref{fig:linearity} reveals a universal pattern. The normalized linear-fit error $\eta_{\mathrm{nl}}$ increases with depth, rising from $[0.48, 0.52]$ in the early layers to $[0.68, 0.71]$ in late layers. Simultaneously, the near-unit mass decreases from $M_{\approx 1} \in [0.68, 0.72]$ to $M_{\approx 1} \in [0.58, 0.60]$. This depth-wise trend has a clear implication: early layers are more linearly approximable, making DMD features more reliable, whereas late layers are less so.

\begin{table}[t]
	\centering
	\caption{GPT-2 layer-wise spectral analysis. Start Linear, End Nonlinear pattern, shorthand for increasing $\eta_{\mathrm{nl}}$. Spectral statistics are computed from residual-stream activations on a fixed set of short prompt sentences. Measured on a fixed short-prompt set.}
	\label{tab:gpt2}
	\vspace{0.5em}
	\begin{tabular}{llcccc}
		\toprule
		\theadrow{\thc{Model}             & \thc{Layers}   & \thc{$\rho$} & \thc{$\kappa(\mathbf{V})$} & \thc{$\eta_{\mathrm{nl}}$} & \thc{$M_{\approx 1}$}}
		\midrule
		\normalfont\unboldmath GPT-2 124M & Early 0 to 3   & $1.15$       & $8.3$                      & $0.52$                     & $0.68$                 \\
		GPT-2 124M                        & Middle 4 to 7  & $1.23$       & $10.1$                     & $0.61$                     & $0.62$                 \\
		GPT-2 124M                        & Late 8 to 11   & $1.31$       & $11.2$                     & $0.71$                     & $0.58$                 \\
		\midrule
		GPT-2 355M                        & Early 0 to 7   & $1.12$       & $7.5$                      & $0.48$                     & $0.72$                 \\
		GPT-2 355M                        & Middle 8 to 15 & $1.19$       & $9.2$                      & $0.58$                     & $0.65$                 \\
		GPT-2 355M                        & Late 16 to 23  & $1.27$       & $10.8$                     & $0.68$                     & $0.60$                 \\
		\bottomrule
	\end{tabular}
\end{table}

\begin{figure}[t]
	\centering
	\includegraphics[width=1.0\columnwidth]{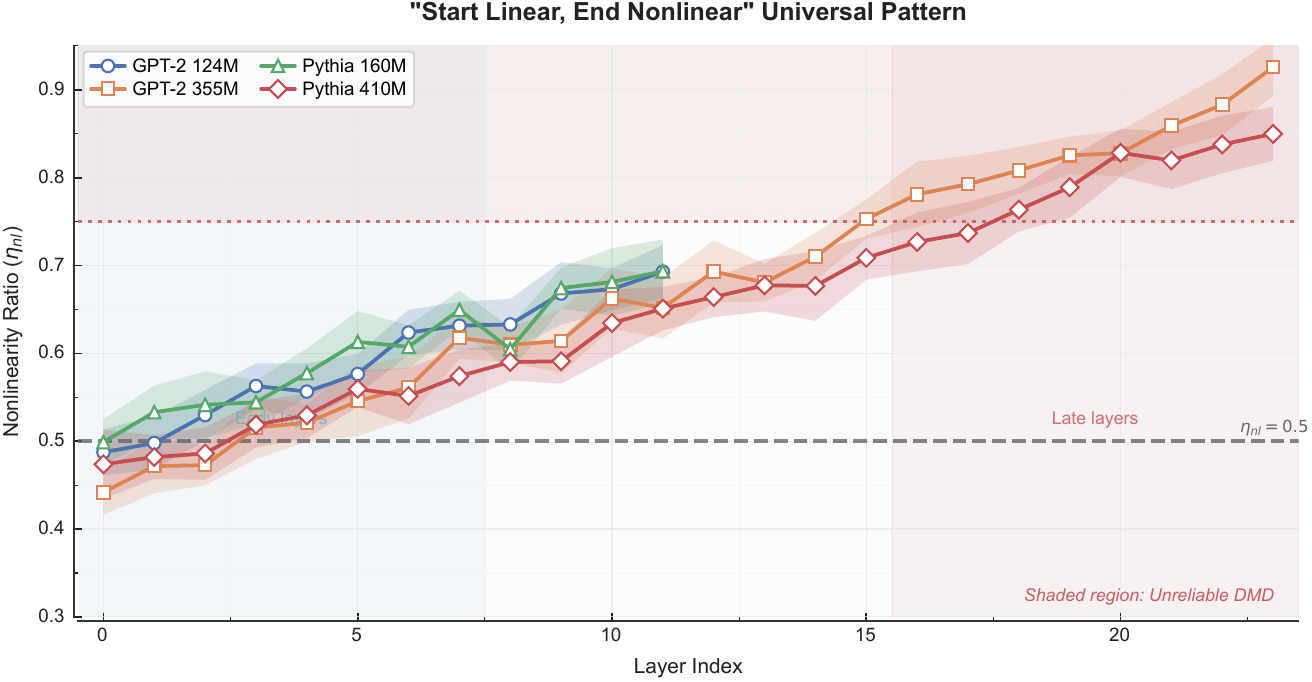}
	\caption{Start Linear, End Nonlinear pattern. Layer-wise normalized linear-fit error $\eta_{\mathrm{nl}}$ across four pretrained models. All models exhibit a monotonically increasing $\eta_{\mathrm{nl}}$ with depth, suggesting a consistent linear-approximation signature across models. Computed from residual-stream activations on a fixed set of short prompt sentences.}
	\label{fig:linearity}
\end{figure}

\section{Calibration}
\label{app:calibration}
Beyond discrimination measured by AUROC, we assess calibration quality. Figure~\ref{fig:calibration} shows that the risk score $M_{\approx 1}$ achieves an Expected Calibration Error (ECE) of 0.283, indicating moderate calibration. The reliability diagram reveals deviations between predicted probabilities and observed frequencies, while the distribution plots show clear separation between converged runs with lower $M_{\approx 1}$, corresponding to lower risk, and diverged runs with higher $M_{\approx 1}$, corresponding to higher risk. This calibration quality matters for deployment: practitioners can interpret RKSP's probability estimates for early termination decisions while accounting for the moderate calibration.

\begin{figure}[t]
	\centering
	\includegraphics[width=\columnwidth]{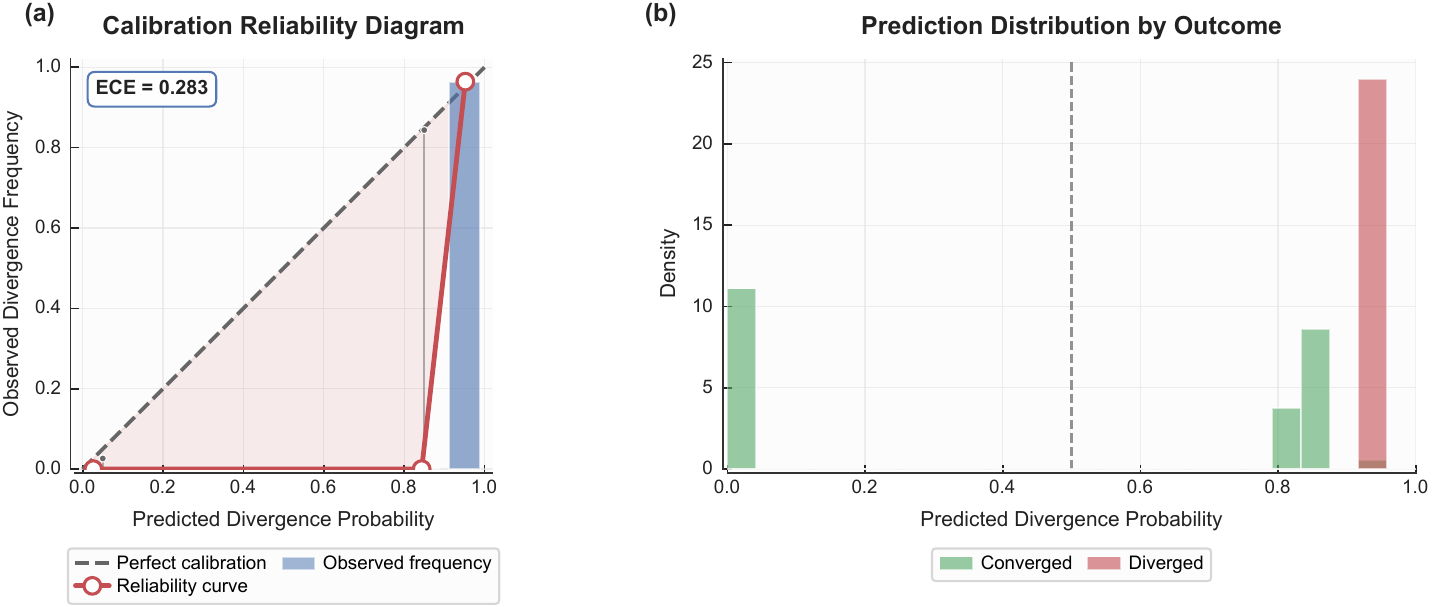}
	\caption{Calibration reliability diagram. (Left) Predicted divergence probability versus observed frequency, with an ECE of 0.283. (Right) Distribution of predictions separated by actual outcome. RKSP provides moderately calibrated probability estimates. Based on associative-recall runs, calibration compares predictions to divergence outcomes from that task.}
	\label{fig:calibration}
\end{figure}

\section{Novel Architecture Case Studies}
\label{app:novel_arch}

To demonstrate RKSP's value beyond standard transformers, we analyze three emerging architectures: MoE \citep{shazeer2017moe}, SSMs including Mamba \citep{gu2023mamba}, and KAN \citep{liu2024kan}.

\paragraph{MoE Transformers} Table~\ref{tab:moe} presents a comparison of MoE routing and stability \citep{shazeer2017moe}. MoE routing induces a higher normalized linear-fit error: $\eta_{\mathrm{nl}}$ increases 15\% to 20\% compared to dense transformers due to discrete routing decisions. The choice of top-$k$ affects spectral stability---higher $k$ shifts spectral mass and changes $M_{\approx 1}$ alongside non-normality and unstable modes. The divergence reductions are consistent with suppressing unstable modes, and within comparable non-normality regimes, larger $M_{\approx 1}$ aligns with greater instability. KSS stabilizes MoE effectively, yielding a 4$\times$ divergence reduction with 6\% accuracy improvement, thereby validating RKSP and KSS for novel architectures.

\begin{table}[t]
	\centering
	\caption{MoE transformer with RKSP analysis. Routing instability revealed via spectral signatures. Results use $d=256$, $L=6$, and 24 trials. Load balancing loss $\lambda=0.01$. Measured on the synthetic LM task with random-token next-token prediction.}
	\label{tab:moe}
	\vspace{0.5em}
	\begin{tabular}{lcccccc}
		\toprule
		\theadrow{\thc{Configuration}               & \thc{Top-$k$} & \thc{$M_{\approx 1}$} & \thc{$\rho$} & \thc{$\eta_{\mathrm{nl}}$} & \thc{Div. of 24} & \thc{Acc.\%}}
		\midrule
		\normalfont\unboldmath MoE-Small, 8 experts & $k=1$         & $0.72$                & $2.85$       & $0.68$                     & 6 of 24          & $32.4$          \\
		MoE-Small, 8 experts                        & $k=2$         & $0.58$                & $2.12$       & $0.55$                     & 2 of 24          & $41.7$          \\
		MoE-Small, 8 experts                        & $k=4$         & $0.45$                & $1.78$       & $0.48$                     & 1 of 24          & $38.2$          \\
		\midrule
		MoE-Medium, 16 experts                      & $k=2$         & $0.65$                & $2.45$       & $0.61$                     & 4 of 24          & $38.9$          \\
		MoE-Medium, 16 experts                      & $k=2$ + KSS   & $0.48$                & $1.92$       & $0.58$                     & \textbf{1 of 24} & $\mathbf{44.5}$ \\
		\bottomrule
	\end{tabular}
\end{table}

\paragraph{State Space Models: Mamba} Table~\ref{tab:mamba} compares SSM and transformer spectral properties for Mamba \citep{gu2023mamba}. The theoretical explanation is straightforward: SSMs are designed with stable discrete-time dynamics via highly structured polynomial projection operator initialization. RKSP reveals this design choice explicitly in the spectral signature: Mamba exhibited $M_{<1} \approx 0.85 \gg M_{\approx 1} \approx 0.12$. This separation indicates strongly contractive dynamics with short memory and weak near-isometry; stability here comes from suppressed unstable modes in a highly contractive regime. In transformer regimes that are closer to near-normal, larger $M_{\approx 1}$ corresponds to weaker damping, longer-range signal retention, and higher instability risk.

\begin{table}[t]
	\centering
	\caption{Mamba with RKSP analysis. Inherently stable spectral structure. Results use $L=6$, 24 trials. Measured on the synthetic LM task with random-token next-token prediction.}
	\label{tab:mamba}
	\vspace{0.5em}
	\begin{tabular}{lccccc}
		\toprule
		\theadrow{\thc{Model}                       & \thc{$M_{\approx 1}$} & \thc{$M_{<1}$} & \thc{$\rho$} & \thc{Div. of 24} & \thc{Acc.\%}}
		\midrule
		\normalfont\unboldmath Mamba-Small, $d=256$ & $0.12$                & $0.85$         & $0.95$       & 0 of 24          & $48.2$        \\
		Mamba-Medium, $d=512$                       & $0.15$                & $0.82$         & $0.97$       & 0 of 24          & $52.6$        \\
		\midrule
		Transformer, Pre-LN, comparable             & $0.42$                & $0.38$         & $1.85$       & 1 of 24          & $45.8$        \\
		\bottomrule
	\end{tabular}
\end{table}

\paragraph{KAN} Table~\ref{tab:kan} reports KAN spectral diagnostics and KSS outcomes \citep{liu2024kan}. KAN shows high normalized linear-fit error: B-spline basis functions produce $\eta_{\mathrm{nl}} \approx [0.78, 0.82]$, higher than the typical transformer layers with $[0.4, 0.7]$. Despite this high $\eta_{\mathrm{nl}}$, RKSP remains informative---ResDMD filtering enables spectral analysis for 68\% to 78\% of modes. KSS benefits KAN with a 3$\times$ to 4$\times$ divergence reduction, suggesting that spectral shaping is architecture-agnostic.

\begin{table}[t]
	\centering
	\caption{KAN transformer with RKSP analysis. The B-spline nonlinearity challenges linear approximation. The B-spline order is $B$. We use $d=256$, $L=6$, and 24 trials. Measured on the synthetic LM task with random-token next-token prediction.}
	\label{tab:kan}
	\vspace{0.5em}
	\begin{tabular}{lccccc}
		\toprule
		\theadrow{\thc{Model}                         & \thc{$\eta_{\mathrm{nl}}$} & \thc{$M_{\approx 1}$} & \thc{$\rho$} & \thc{Div. of 24} & \thc{DMD reliability}}
		\midrule
		\normalfont\unboldmath KAN-Transformer, $B=4$ & $0.78$                     & $0.52$                & $2.15$       & 3 of 24          & Marginal, 68\%         \\
		KAN-Transformer, $B=8$                        & $0.82$                     & $0.48$                & $2.35$       & 4 of 24          & Low, 52\%              \\
		\midrule
		KAN-Transformer + KSS                         & $0.75$                     & $0.42$                & $1.92$       & \textbf{1 of 24} & Improved, 78\%         \\
		\bottomrule
	\end{tabular}
\end{table}

\paragraph{Cross-Architecture Summary}
Table~\ref{tab:arch_compare} summarizes cross-architecture metrics, while Figure~\ref{fig:radar} provides a normalized radar-chart view of the same comparison.

\begin{figure}[t]
	\centering
	\includegraphics[width=0.50\columnwidth]{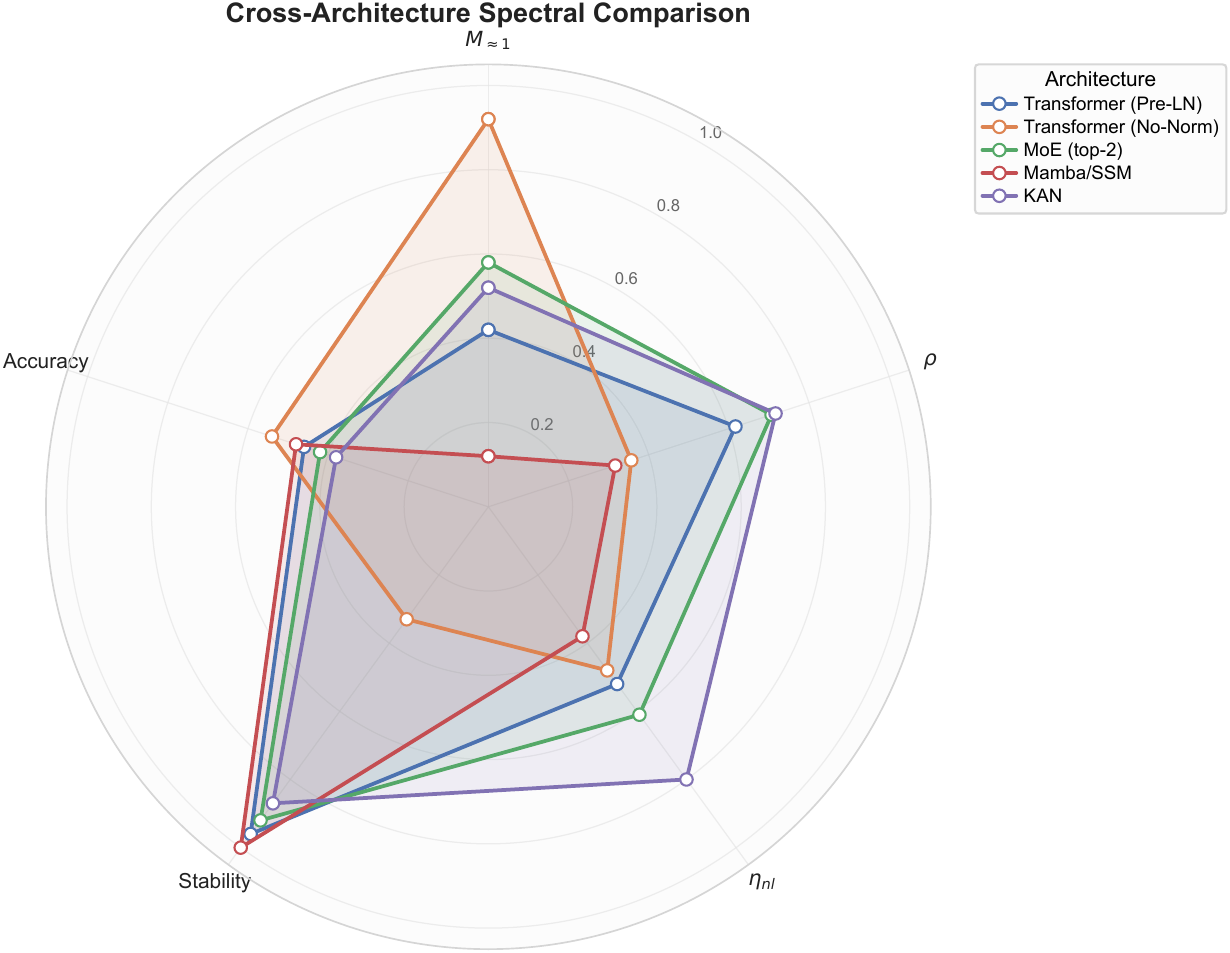}
	\caption{Cross-architecture spectral radar chart. Comparison of five architectures across five normalized metrics. Mamba exhibits strong contraction with low $M_{\approx 1}$ and short memory; stability is maintained via suppressed unstable modes, while in near-normal transformer regimes, higher $M_{\approx 1}$ aligns with more unstable, near-isometric propagation. The No-Norm transformer shows high memory capacity but poor stability. KAN exhibits high $\eta_{\mathrm{nl}}$. Metrics are derived from Table~\ref{tab:arch_compare}.}
	\label{fig:radar}
\end{figure}

\section{Practical Notes}
\label{app:practical_notes}

RKSP and KSS are most valuable in three scenarios. First, when mechanistic understanding matters, RKSP explains why Pre-LN outperforms Post-LN through spectral signatures. Second, when pushing training limits, KSS enables learning rates that are 50\% to 150\% higher for faster convergence. Third, when deploying novel architectures, RKSP verifies stability before expensive training runs. Edge cases benefit most from these diagnostics---situations where standard normalization fails or where training operates near stability boundaries.

\paragraph{Fixup and ReZero-style identity initialization.}
A common stabilization trick in deep residual networks and transformers is to initialize the final projection of each residual branch to zero, for example the attention and MLP output weights, so that the network starts close to an identity map \citep{zhang2019fixup,bachlechner2020rezero}. In our notation, this yields a residual-off regime with a vanishing layer update $\mathbf{h}_{\ell+1}-\mathbf{h}_\ell \approx \mathbf{0}$, so the snapshot pairs satisfy $\mathbf{Y}_\ell \approx \mathbf{X}_\ell$ and DMD returns $\hat{\mathbf{A}}_\ell \approx \mathbf{I}$. Consequently, $M_{\approx 1}^\ell$ can be close to $1$ across layers even though training is often stable under Fixup and ReZero at initialization.

Taken alone, a near-identity spectrum might seem to imply maximal instability risk. However, our instability mechanism assumes two conditions: weak damping with large $M_{\approx 1}$ under near-normality, and non-degenerate layer-wise dynamics with appreciable updates so that perturbations and optimization noise are repeatedly injected and propagated across depth. Fixup and ReZero violate the second condition at initialization. When $\norm{\tilde{\mathbf{Y}}_\ell-\tilde{\mathbf{X}}_\ell}_F$ is near zero, there is essentially no layer-wise update to analyze, and the resulting DMD spectrum is not informative about the noisy training-time regime we target.

Practically, this degeneracy is detectable from the same quantities RKSP already computes. When $\norm{\tilde{\mathbf{Y}}_\ell-\tilde{\mathbf{X}}_\ell}_F \approx 0$, the normalization in the nonlinearity ratio \eqref{eq:nonlinearity} becomes ill-conditioned, so $\eta_{\mathrm{nl}}(\ell)$ should be interpreted as a DMD reliability flag rather than as a meaningful nonlinearity estimate. For Fixup and ReZero, RKSP becomes informative after a small amount of training, once the zero-initialized residual projections move away from zero and layer-wise updates become observable; at that point, RKSP can again capture whether the residual stream exhibits excessive near-isometric propagation (large $M_{\approx 1}$) that correlates with high-learning-rate divergence.

Practical deployment is straightforward. We recommend using RKSP in four scenarios: first, as a fast filter during architecture search; second, before expensive hyperparameter grid search; third, for periodic spectral monitoring during training; and fourth, for debugging checkpoints before divergence. Figure~\ref{fig:decision_flowchart} provides an actionable decision process.

\begin{figure}[t]
	\centering
	\begin{tikzpicture}[
			scale=0.95,
			transform shape,
			font=\small,
			node distance=7mm,
			block/.style={rectangle, draw, rounded corners, align=left, text width=0.86\columnwidth, inner sep=5pt, fill=gray!5},
			decision/.style={diamond, draw, aspect=2.2, align=center, text width=0.46\columnwidth, inner sep=1pt, fill=gray!5},
			line/.style={-Stealth, thick, shorten >=2pt}
		]
		\node[block] (start) {Run RKSP at initialization. Compute $M_{\approx 1}$ and $\kappa(\mathbf{V})$ for the decision.};
		\node[decision, below=of start] (safe) {Meets safe-region criteria\\$M_{\approx 1} < 0.3$ and modest $\kappa(\mathbf{V})$};
		\node[block, below=of safe] (stable) {Proceed without KSS. Under this criterion, training is more stable; optionally, monitor with periodic RKSP snapshots.};
		\node[block, below=of stable] (context) {Decide on KSS by context.\\
			If $M_{\approx 1} > 0.5$ or $\kappa(\mathbf{V})$ is large, treat as high risk.\\
			Use KSS for aggressive learning rates above $2\times$ standard, for no-normalization settings, or for novel architectures with RKSP monitoring.\\
			Skip KSS for standard Pre-LN and RMSNorm at conservative learning rates.};
		\coordinate (no-x) at ([xshift=6mm]context.east);
		\coordinate (no-top) at (no-x |- safe.east);
		\coordinate (no-above) at ([yshift=4mm]context.north);
		\coordinate (no-drop) at (no-x |- no-above);
		\draw[line] (start.south) -- (safe.north);
		\draw[line] (safe.south) -- node[midway, right]{yes} (stable.north);
		\draw[line] (safe.east) -- node[midway, above]{no} (no-top) -- (no-drop) -- (no-above) -- (context.north);
	\end{tikzpicture}
	\caption{Decision flowchart for when to use RKSP and KSS in practice.}
	\label{fig:decision_flowchart}
\end{figure}
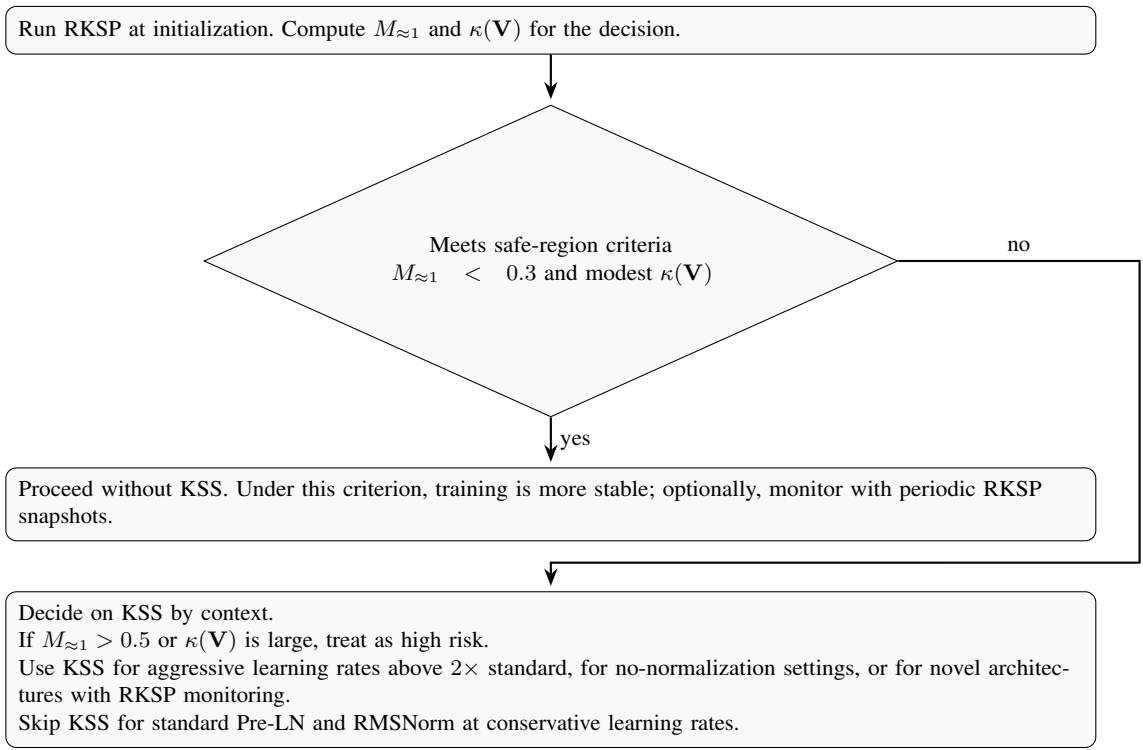

\end{document}